\documentclass{article}

 \usepackage[preprint]{neurips_2026}

\usepackage[utf8]{inputenc} % allow utf-8 input
\usepackage[T1]{fontenc}    % use 8-bit T1 fonts
\usepackage{hyperref}       % hyperlinks
\usepackage{url}            % simple URL typesetting
\usepackage{booktabs}       % professional-quality tables
\usepackage{amsmath}
\usepackage{amsfonts}       % blackboard math symbols
\usepackage{amssymb}
\usepackage{amsthm}
\usepackage{nicefrac}       % compact symbols for 1/2, etc.
\usepackage{microtype}      % microtypography
\usepackage{xcolor}         % colors
\usepackage{graphicx}
\usepackage{multirow}

\newtheorem{proposition}{Proposition}
\newtheorem{theorem}{Theorem}
\newtheorem{lemma}{Lemma}
\newtheorem{corollary}{Corollary}

\title{Stochastic Flow Map for Count Data}

\author{%
  Ganchao Wei\thanks{Alternative email: \texttt{weiganchao@gmail.com}} \\
  Flatiron Institute\\
  New York, NY 10010, USA\\
  \texttt{gwei@flatironinstitute.org}
}

\begin{document}

\maketitle

\begin{abstract}
High-dimensional count data are common in scientific applications, but most diffusion and flow models are designed for continuous or categorical data, and generation often requires many sequential model evaluations. We propose Count Flow Map, a generative model that learns finite-time transitions directly in count space for one- or few-step generation.
Our model directly learns stochastic transitions over finite time intervals, using Poisson births and Binomial deaths to preserve nonnegative integer counts without a predefined maximum.
These transition models are trained to match the underlying local birth--death dynamics and to maintain consistency across step sizes. We characterize the connection between local dynamics and finite-time transition consistency and derive a bound on the generation error. 
After validating Count Flow Map in several simulations, including a high-dimensional, high-count setting, we apply it to single-cell drug perturbation prediction and neural population forecasting, where it captures perturbation responses and supports forecasts of high-activity events with only one or a few model evaluations. Together, these experiments demonstrate that Count Flow Map enables high-quality generation directly in count space across inference budgets, from one-step to few-step generation, using a single trained model.
\end{abstract}

\section{Introduction}
\label{sec:introduction}

High-dimensional count data arise in many scientific applications, including
single-cell RNA sequencing and neural spike trains. Their discreteness,
sparsity, and complex dependence structure make flexible joint modeling
challenging, particularly when the count range is large. Recent advances in
diffusion models and flow matching provide powerful tools for generative
modeling \citep{ho2020denoising-diffusion,lipman2023flow-matching,albergo2025stochastic-interpolants},
but their application to count data has two challenges.
First, most existing methods are designed for continuous or categorical
data. Applying continuous models to counts requires a continuous
representation of the data, while categorical models treat each possible
count as a separate category
\citep{austin2021d3pm,campbell2022continuous-time-discrete,gat2024discrete-flow-matching}.
This categorical representation typically requires specifying a maximum
count, and the model's output dimension grows with the supported count
range. Second, generation typically requires many sequential evaluations
of the learned dynamics, making sampling costly even when training is
efficient.

Count-specific generative models have begun to address the representation
problem, including count-FM, Count Bridges, and CountsDiff
\citep{wei2026count-flow-matching,fishman2026count-bridges,soatto2026countsdiff}.
In particular, count-FM models transport between arbitrary source and target
count distributions directly on $\mathbb{N}_0^d$ through local birth--death
dynamics. Its conditional binomial bridge enables simulation-free training
of the marginal transition rates, while its rate parameterization avoids
enumerating possible count levels. Count-FM therefore provides a direct and
parameter-efficient formulation for count-valued transport. However, it
learns infinitesimal transition rates, and generation still requires
repeatedly simulating short-time transitions of the learned jump process.
The remaining challenge is to retain this count-valued formulation while
reducing the number of model evaluations needed for generation.
Flow Maps provide a way to address this sampling cost by learning
finite-time evolution directly
\citep{boffi2025flow-map-matching,boffi2025flow-map-self-distillation}.
For continuous dynamics, a two-time flow map predicts the state reached
over a specified interval, allowing one- or few-step generation from the
same trained model. This perspective connects to consistency models, consistency trajectory models, progressive distillation, and shortcut models \citep{song2023consistency-models,kim2024consistency-trajectory,salimans2022progressive-distillation,frans2025shortcut-models}.
Recent Discrete Flow Maps and Categorical Flow Maps extend accelerated
generation to categorical data through probability-simplex representations
\citep{potaptchik2026discrete-flow-maps,roos2026categorical-flow-maps}.
These developments motivate extending count-FM from learning local
birth--death rates to learning the corresponding finite-time transitions, while preserving its unbounded count space.

We propose \emph{Count Flow Map}, which extends count-FM from local birth--death dynamics to finite-time transitions, enabling one- or few-step generation directly in count space. Count Flow Map models stochastic transitions between pairs of time points using Poisson births and Binomial deaths, ensuring that samples remain nonnegative integers without imposing a predefined maximum count. The transition models are trained to recover count-FM's local birth--death dynamics in the infinitesimal-time limit and to remain consistent when longer transitions are composed from shorter ones. 
This allows Count Flow Map to support different inference budgets without retraining.

We evaluate Count Flow Map in 2-D and 32-D simulations and in two biological applications, single-cell drug perturbation prediction using Tahoe-100M \citep{zhang2025tahoe100m} and neural population forecasting using the recordings of \citet{steinmetz2019distributed}. Overall, these experiments show that Count Flow Map achieves strong distributional quality at low sampling cost, requiring only one or a few model evaluations.
Our main contributions are as follows:
\begin{itemize}
\item We formulate Count Flow Map using stochastic transition kernels on an unbounded count space and characterize their connection to local birth--death dynamics through the forward, backward, and Chapman--Kolmogorov consistency conditions.
\item We develop a tractable transition model with Poisson births and Binomial deaths, with finite-time corrections that preserve the local birth--death dynamics. The training objective combines count-FM rate matching with transition consistency across time intervals.
\item We derive a terminal-error bound that separates endpoint truncation, generator estimation, and finite-time approximation, and quantify how local approximation errors and transition inconsistencies affect finite-time accuracy.
\end{itemize}

\section{Method}
\label{sec:method}

In this section, we formulate Count Flow Map, introduce a finite-time transition model for count data, derive the training objective, and describe one- and few-step generation. Proofs and additional derivations are given in Appendix~\ref{app:method-theory}. A Python implementation of Count Flow Map is available at \url{https://github.com/weigcdsb/count-flow-map}.

\subsection{Count Flow Map}
\label{subsec:count-flow-map-definition}

Let $\mathcal{X}:=\mathbb{N}_0^d$ denote the space of $d$-dimensional nonnegative count vectors, and let $p_0$ and $p_1$ be source and target distributions on $\mathcal{X}$. We consider a probability path $(p_t)_{t\in[0,1]}$ connecting them, with endpoints $p_0$ and $p_1$.
For $0\le s\le t\le1$, a \emph{Count Flow Map} is a Markov transition kernel
\begin{equation}
K_{s,t}(y\mid x):=\mathbb{P}(X_t=y\mid X_s=x),
\qquad x,y\in\mathcal{X},
\label{eq:count-flow-map}
\end{equation}
that transports the path marginals, $p_t(y)=\sum_{x\in\mathcal{X}}p_s(x)K_{s,t}(y\mid x)$. Thus, for a fixed state $x$ at time $s$, the finite-time map returns the distribution of the state at time $t$. For a function $f:\mathcal{X}\to\mathbb{R}$, we write $(K_{s,t}f)(x):=\sum_yK_{s,t}(y\mid x)f(y)$ and let $I$ denote the identity kernel. The infinitesimal generator of the transition family is
\begin{equation}
Q_t:=\lim_{h\downarrow0}\frac{K_{t,t+h}-I}{h}.
\label{eq:count-flow-map-generator}
\end{equation}
We refer to $Q_t$ as the diagonal generator, as it is determined by $K_{s,t}$ near the diagonal $s=t$. For each coordinate $i\in\{1,\ldots,d\}$, we allow local unit transitions, i.e., a birth increases the $i$th count from $x_i$ to $x_i+1$, while a death decreases it from $x_i$ to $x_i-1$. Let $e_i$ denote the $i$th standard basis vector. The resulting birth--death generator is
\begin{equation}
(Q_tf)(x)
=
\sum_{i=1}^d\lambda_{t,i}(x)\bigl[f(x+e_i)-f(x)\bigr]
+
\sum_{i=1}^d\mu_{t,i}(x)\bigl[f(x-e_i)-f(x)\bigr],
\label{eq:birth-death-generator}
\end{equation}
where $\lambda_{t,i}(x)\ge0$ and $\mu_{t,i}(x)\ge0$ are the coordinate-wise birth and death rates, with $\mu_{t,i}(x)=0$ whenever $x_i=0$.

For two kernels $K$ and $M$, define their composition by $(KM)(y\mid x)=\sum_z K(z\mid x)M(y\mid z)$. The consistency conditions for deterministic Flow Maps \citep{boffi2025flow-map-self-distillation} have the following stochastic counterparts:
\begin{equation}
\begin{aligned}
\partial_tK_{s,t}&=K_{s,t}Q_t,
&&\text{Lagrangian (forward) consistency},\\
\partial_sK_{s,t}&=-Q_sK_{s,t},
&&\text{Eulerian (backward) consistency},\\
K_{s,t}&=K_{s,u}K_{u,t},\qquad s\le u\le t,
&&\text{semigroup consistency}.
\end{aligned}
\label{eq:count-flow-map-consistency}
\end{equation}
The first two are the forward and backward Kolmogorov equations, and the third is the Chapman--Kolmogorov identity \citep{norris1997markov-chains,feinberg2014kolmogorov}. Appendix~\ref{app:flow-map-consistency-proof} derives all three identities for the exact transition family and proves the converse characterization used below.

Let $P^Q_{s,t}(y\mid x)$ denote the transition kernel of the continuous-time Markov process generated by $Q_t$, i.e., the probability of reaching $y$ at time $t$ given $x$ at time $s$.

\begin{proposition}[Characterization of a Count Flow Map]
\label{prop:count-flow-map-characterization}
Suppose that $Q_t$ generates a unique nonexplosive transition family $P^Q_{s,t}$. Under the regularity conditions in Appendix~\ref{app:flow-map-consistency-proof}, a transition family $K_{s,t}$ with $K_{t,t}=I$ equals $P^Q_{s,t}$ if it satisfies either the forward equation, the backward equation, or the Chapman--Kolmogorov identity with infinitesimal generator $Q_t$.
\end{proposition}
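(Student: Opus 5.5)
We treat the three conditions separately and reduce each to uniqueness for the Kolmogorov equations of the nonexplosive chain generated by $Q_t$. The regularity conditions we rely on are: $K_{s,t}(y\mid x)$ is a stochastic kernel (nonnegative, rows summing to at most one), it is absolutely continuous in $s$ and $t$ with the stated derivatives holding entrywise, and the rates $\lambda_{t,i},\mu_{t,i}$ are locally integrable in $t$. Since $Q_t$ only moves by $\pm e_i$, each sum $\sum_z K_{s,t}(z\mid x)Q_t(y\mid z)$ has at most $2d+1$ terms, so the products are well defined.

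\emph{Backward equation.} Fix $y$ and set $u(s,x):=K_{s,t}(y\mid x)$. This is a bounded solution of $\partial_s u=-Q_s u$ on $[0,t]$ with terminal value $u(t,\cdot)=\mathbf 1_{\{y\}}$. We show $u=P^Q_{\cdot,t}\mathbf 1_{\{y\}}$ by the standard minimal-solution argument \citep{norris1997markov-chains}. Rewrite the equation in integral (first-jump) form, with $q_s(x):=\sum_i(\lambda_{s,i}+\mu_{s,i})(x)$:
\begin{equation*}
u(s,x)=e^{-\int_s^t q_r(x)dr}\mathbf 1_{\{y\}}(x)+\int_s^t e^{-\int_s^r q_v(x)dv}\sum_{z\ne x}Q_r(z\mid x)\,u(r,z)\,dr .
\end{equation*}
Iterating this identity expresses $u$ as the sum over paths with at most $n$ jumps, plus a remainder bounded by $\|u\|_\infty$ times the probability of making more than $n$ jumps before $t$. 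Nonexplosivity sends that remainder to zero as $n\to\infty$, which gives $u=P^Q_{s,t}\mathbf 1_{\{y\}}$.

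\emph{Forward equation.} Fix $x$ and set $v_t(y):=K_{s,t}(y\mid x)$. This is a nonnegative sub-probability solution of $\partial_t v=vQ_t$ with $v_s=\delta_x$. The forward equation for the minimal chain has its minimal solution equal to $P^Q_{s,\cdot}(\cdot\mid x)$, so $v_t\ge P^Q_{s,t}(\cdot\mid x)$ entrywise, again by the jump-path expansion, now run in forward variables. Nonexplosivity gives $\sum_y P^Q_{s,t}(y\mid x)=1$. Since $\sum_y v_t(y)\le1$, the difference is a nonnegative vector of total mass at most zero, hence zero.

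\emph{Chapman--Kolmogorov.} Here the hypothesis is that $K$ is a semigroup whose generator, defined by \eqref{eq:count-flow-map-generator} in the sense of entrywise right derivatives, is $Q_t$. Using $K_{s,t+h}=K_{s,t}K_{t,t+h}$ gives
\begin{equation*}
\frac{K_{s,t+h}-K_{s,t}}{h}=K_{s,t}\,\frac{K_{t,t+h}-I}{h}.
\end{equation*}
The generator limit is taken entrywise, and the sum over intermediate states is infinite, so we need a dominated-convergence step to pass the limit inside. We control it with the bound $(1-K_{t,t+h}(z\mid z))/h\le C(z)$, locally uniformly, together with conservativity and tightness of the $K$ rows; these are among the regularity conditions. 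The same device applied to $K_{s-h,t}=K_{s-h,s}K_{s,t}$ yields the backward equation, which is easier because the sum there runs only over the at most $2d+1$ neighbours. The backward case above then finishes the argument.

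The main obstacle is the first-jump iteration for time-inhomogeneous, possibly unbounded rates. We must verify that nonexplosivity, meaning almost surely finitely many jumps on $[0,1]$, gives uniqueness among bounded solutions and not merely among minimal ones. We would first try to import the time-homogeneous proof of Norris by a time change, or by adding time as a coordinate to make the chain homogeneous. Failing that, we would assume a Lyapunov function $V$ with $Q_tV\le cV$ and run a Gronwall argument on $\sum_y |K-P^Q|(y\mid x)V(y)$.
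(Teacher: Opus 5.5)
Your argument is correct. Its skeleton matches the paper's proof in Appendix~\ref{app:flow-map-consistency-proof}: reduce each hypothesis to a Kolmogorov equation, then invoke uniqueness. It differs in two substantive ways.

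First, the paper only asserts that the forward equation with $K_{s,s}=I$ and the backward equation with $K_{t,t}=I$ each determine $P^Q_{s,t}$, and defers to \citet{norris1997markov-chains} and \citet{feinberg2014kolmogorov}. You actually prove both. The backward case holds among all bounded solutions, via the first-jump expansion with remainder at most $\|u\|_\infty\,\mathbb{P}_{s,x}(J_{n+1}\le t)$. The forward case holds among nonnegative sub-stochastic solutions, via minimality plus the mass comparison with $\sum_yP^Q_{s,t}(y\mid x)=1$.

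Second, in the Chapman--Kolmogorov case the paper differentiates $K_{s,t+h}=K_{s,t}K_{t,t+h}$ to obtain the \emph{forward} equation. That step exchanges $h\downarrow0$ with the infinite sum $\sum_zK_{s,t}(z\mid x)\,h^{-1}[K_{t,t+h}(y\mid z)-\delta_{zy}]$. Birth--death rates on $\mathbb{N}_0^d$ are typically unbounded (e.g.\ $\mu_{\theta,i}=x_i\beta_{\theta,i}$), so the exchange needs a domination condition such as $\sum_zK_{s,t}(z\mid x)C(z)<\infty$. The paper leaves this implicit in its regularity assumptions, and the ``tightness'' in your own sketch of that route would not supply it either. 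Your pivot to the \emph{backward} equation via $K_{s-h,t}=K_{s-h,s}K_{s,t}$ avoids the issue, because the factor $K_{s,t}(y\mid\cdot)\le1$ is bounded. The paper's route is shorter; yours is self-contained and needs weaker hypotheses on $K$.

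Two small corrections.

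First, the backward difference quotient does not run over only $2d+1$ states. Because births are Poisson, $K_{s-h,s}(\cdot\mid x)$ charges all of $\mathcal{X}$; only its limit $Q_s(\cdot\mid x)$ is supported on $\{x,x\pm e_i\}$. What makes the step work is conservativity (rows of $K$ sum to one, rows of $Q_s$ sum to zero), which controls the off-neighbour tail:
\[
h^{-1}\Bigl[1-K_{s-h,s}(x\mid x)-\textstyle\sum_{i}\bigl(K_{s-h,s}(x+e_i\mid x)+K_{s-h,s}(x-e_i\mid x)\bigr)\Bigr]\;\longrightarrow\; q_s(x)-q_s(x)=0 .
\]
You also need the local uniformity you mention, since the left quotient involves the generator at time $s-h$. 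Continuity is then needed to upgrade the one-sided derivative to the absolute continuity your integral form uses.

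Second, your ``main obstacle'' is already resolved by the iteration you wrote. Write your first-jump identity as $u=\Phi+Tu$. For locally integrable inhomogeneous rates, $T^n\mathbf 1(s,x)=\mathbb{P}_{s,x}(J_n\le t)$ follows by induction directly from the integral operator, and nonexplosivity is exactly the statement $T^n\mathbf 1\to0$. Your remainder bound needs neither minimality nor a sign on $u$, so no time change or Lyapunov function is required.
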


Thus, the diagonal generator $Q_t$ specifies the local dynamics, while any one of the three consistency conditions identifies the corresponding finite-time transition family. For training, we use Chapman--Kolmogorov consistency $K_{s,t}=K_{s,u}K_{u,t}$ because it can be enforced directly through kernel composition.

\subsection{Transition kernel for count data}
\label{subsec:count-transition-kernel}

We next introduce a parametric model $K_{\theta,s,t}$ for the Count Flow Map $K_{s,t}$. Its diagonal generator $Q_t^\theta$ has the same birth--death form as $Q_t$ in Equation~\eqref{eq:birth-death-generator}, with birth and death rates $\lambda_{\theta,i}(x,t)$ and $\mu_{\theta,i}(x,t)$, respectively. We parameterize the death rate as $\mu_{\theta,i}(x,t)=x_i\beta_{\theta,i}(x,t)$, where $\beta_{\theta,i}(x,t)\ge0$ is the per-count death rate.
For a finite interval from $s$ to $t$, let $\Delta=t-s$. Given the
current state $x$ at time $s$, we model coordinate $i$ by removing a
random number of existing counts and adding a random number of new counts,
\begin{equation}
Y_i=x_i-D_i+B_i,
\qquad
B_i\sim\operatorname{Poisson}(a_i),
\qquad
D_i\sim\operatorname{Binomial}(x_i,q_i).
\label{eq:poisson-binomial-transition}
\end{equation}
Because $0\le D_i\le x_i$ and $B_i\ge0$, this transition remains in $\mathbb{N}_0$ without imposing an upper bound on the count. This construction is related to classical integer-valued autoregressive models, which combine binomial thinning of existing counts with new arrivals \citep{alosh1987inar}, providing a natural count-valued form for modeling finite-time births and deaths.

In Equation~\eqref{eq:poisson-binomial-transition}, $a_i(x,s,t)\ge0$ is the Poisson mean for the number of births, while $q_i(x,s,t)\in[0,1]$ is the Binomial probability that each count present at time $s$ is removed by time $t$. These finite-time parameters are constructed so that, as $t\downarrow s$, $a_i$ and $q_i$ match the local birth rate $\lambda_{\theta,i}(x,s)$ and per-count death rate $\beta_{\theta,i}(x,s)$ to first order, while finite-time corrections provide flexibility to learn departures from this local approximation over longer intervals. 
To capture dependence across coordinates, we use a finite mixture of these Poisson--Binomial kernels, with the mixture component shared across coordinates.
The precise parameterization and the resulting probability mass function are given in Appendix~\ref{app:count-kernel-likelihood}.

\begin{proposition}[Diagonal generator consistency]
\label{prop:diagonal-generator-consistency}
Suppose that the finite-time birth and death parameters satisfy
$a_i(x,t,t+h)=h\lambda_{\theta,i}(x,t)+O(h^2)$ and
$q_i(x,t,t+h)=h\beta_{\theta,i}(x,t)+O(h^2)$ as $h\downarrow0$.
Then $K_{\theta,t,t}=I$ and
\[
K_{\theta,t,t+h}(x+e_i\mid x)
=h\lambda_{\theta,i}(x,t)+O(h^2),
\qquad
K_{\theta,t,t+h}(x-e_i\mid x)
=h\mu_{\theta,i}(x,t)+O(h^2).
\]
All transitions involving two or more unit events have probability $O(h^2)$. Hence
\[
\lim_{h\downarrow0}\frac{K_{\theta,t,t+h}-I}{h}=Q_t^\theta,
\]
where $Q_t^\theta$ is the birth--death generator introduced above.
\end{proposition}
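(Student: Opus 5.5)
The plan is to work coordinate by coordinate and then combine them. We first treat a single mixture component. There the kernel factorizes across coordinates, since the $(B_i,D_i)$ are independent given $x$.

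\textbf{Identity at $h=0$.} From the hypotheses, $a_i(x,t,t)=0$ and $q_i(x,t,t)=0$. Then $B_i\sim\operatorname{Poisson}(0)$ and $D_i\sim\operatorname{Binomial}(x_i,0)$ are both a.s.\ zero, so $Y=x$ and $K_{\theta,t,t}=I$. If a mixture is used, this holds for every component, so it holds for the mixture as well. In that case we assume each component's $a_i,q_i$ satisfies the stated expansions, and that the mixture weights are bounded and sum to one. The generator of the mixture is then the weighted average of the component generators, which is how $\lambda_\theta,\beta_\theta$ should be read.

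\textbf{Single-coordinate expansion.} Fix $i$ and write $a=a_i(x,t,t+h)$ and $q=q_i(x,t,t+h)$. We expand the Poisson probabilities:
\[
\mathbb{P}(B_i=0)=e^{-a}=1-a+O(a^2),\qquad \mathbb{P}(B_i=1)=ae^{-a}=a+O(a^2),\qquad \mathbb{P}(B_i\ge2)=O(a^2).
\]
We do the same for the Binomial probabilities:
\[
\mathbb{P}(D_i=0)=(1-q)^{x_i}=1-x_iq+O(q^2),\qquad \mathbb{P}(D_i=1)=x_iq(1-q)^{x_i-1}=x_iq+O(q^2),\qquad \mathbb{P}(D_i\ge2)\le\binom{x_i}{2}q^2.
\]
Since $a,q=O(h)$, every $O(a^2)$ and $O(q^2)$ term is $O(h^2)$. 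The constants depend on $x$; this is fine because the statement is pointwise in $x$. Substituting $a=h\lambda_{\theta,i}+O(h^2)$ and $q=h\beta_{\theta,i}+O(h^2)$ gives
\[
\mathbb{P}(B_i=1,D_i=0)=h\lambda_{\theta,i}(x,t)+O(h^2),\qquad \mathbb{P}(B_i=0,D_i=1)=hx_i\beta_{\theta,i}(x,t)+O(h^2)=h\mu_{\theta,i}(x,t)+O(h^2).
\]
The event $B_i=D_i=1$ returns the coordinate to $x_i$, but it has probability $O(h^2)$. Hence $\mathbb{P}(Y_i=x_i)=1-h(\lambda_{\theta,i}+\mu_{\theta,i})+O(h^2)$.

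\textbf{Combining coordinates.} The joint probability of any target $y$ is a product over coordinates. Reaching $x\pm e_i$ requires coordinate $i$ to move by one unit and every other coordinate $j$ to return to $x_j$. The latter has probability $1-O(h)$, so the product is $h\lambda_{\theta,i}+O(h^2)$ or $h\mu_{\theta,i}+O(h^2)$. Any configuration with two or more unit events comes in one of two ways:
\begin{itemize}
\item two coordinates each have an event, which costs $O(h)\cdot O(h)$;
\item a single coordinate has $B_i+D_i\ge2$, which costs $O(h^2)$ by the tail bounds above.
\end{itemize}
Such a configuration therefore has probability $O(h^2)$. There are infinitely many targets $y$, so we bound their total mass rather than summing termwise. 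Specifically,
\[
\mathbb{P}\bigl(\textstyle\sum_j(B_j+D_j)\ge2\bigr)\le\sum_j\bigl[\mathbb{P}(B_j+D_j\ge2)\bigr]+\Bigl(\sum_j\mathbb{P}(B_j+D_j\ge1)\Bigr)^2=O(h^2).
\]
Combined with the diagonal term, this gives
\[
K_{\theta,t,t+h}(x\mid x)=1-h\sum_i(\lambda_{\theta,i}+\mu_{\theta,i})+O(h^2).
\]

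\textbf{Generator limit.} For the limit we take $f$ bounded. Then $(K_{\theta,t,t+h}f)(x)-f(x)$ splits into three parts:
\begin{itemize}
\item the diagonal change, $-h\sum_i(\lambda_{\theta,i}+\mu_{\theta,i})f(x)$;
\item the unit-jump terms, $h\sum_i\lambda_{\theta,i}f(x+e_i)+h\sum_i\mu_{\theta,i}f(x-e_i)$;
\item a remainder bounded by $\|f\|_\infty\cdot O(h^2)$.
\end{itemize}
Dividing by $h$ and letting $h\downarrow0$ yields Equation~\eqref{eq:birth-death-generator} with the $\theta$-rates. For the mixture, we average over components and use finitely many components.

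\textbf{Main obstacle.} The delicate step is controlling the aggregate mass of multi-event transitions uniformly over the infinite set of targets, which the union bound above handles. A second point is deciding the function class for which the limit is stated. Pointwise in $x$ with bounded $f$ (equivalently, entrywise convergence of the kernel) is what I would commit to.
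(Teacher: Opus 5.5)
Your proposal is correct and follows essentially the same route as the paper's proof. Both use coordinatewise Poisson/Binomial expansions to get $h\lambda_{\theta,i}+O(h^2)$ and $hx_i\beta_{\theta,i}+O(h^2)$ for single unit events and $O(h^2)$ for every multi-event configuration, then average over the finitely many mixture components; your union bound on the total multi-event mass and the bounded-$f$ limit make explicit steps the paper states more briefly. On the mixture: in the paper all components share the same first-order rates $\lambda_{\theta,i},\beta_{\theta,i}$ (only the $\Delta c^\pm_{\theta,j,i}$ corrections depend on $j$), so averaging needs nothing beyond weights summing to one, whereas your ``weighted average of component generators'' reading would, for genuinely different component rates, also need control of $w_{\theta,j}(x,t,t+h)$ as $h\downarrow0$ (e.g., continuity).
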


The proof is provided in Appendix~\ref{app:diagonal-generator-proof}. Thus, the finite-time kernel $K_{\theta,s,t}$ retains $Q_t^\theta$ as its infinitesimal generator, preserving the intended local birth--death dynamics.

\subsection{Training objective}
\label{subsec:count-flow-map-training}

We train the diagonal generator and the off-diagonal finite-time transitions jointly. The diagonal objective coincides with count-FM \citep{wei2026count-flow-matching}, and the off-diagonal objective enforces Chapman--Kolmogorov consistency.

\paragraph{Diagonal rate matching.}
Let $\pi$ be a coupling of $p_0$ and $p_1$, and draw $(X_0,X_1)\sim\pi$. Conditional on $(X_0=x_0,X_1=x_1)$, we use the signed-binomial bridge from count-FM. For each coordinate $i$ and time $t\in[0,1]$,
\begin{equation}
X_t^{(i)}
=x_{0,i}+\operatorname{sgn}(x_{1,i}-x_{0,i})B_t^{(i)},
\qquad
B_t^{(i)}\sim\operatorname{Binomial}\!\left(|x_{1,i}-x_{0,i}|,t\right),
\label{eq:signed-binomial-bridge}
\end{equation}
independently across coordinates given the endpoints. 
As shown in count-FM \citep{wei2026count-flow-matching}, substituting this bridge into the one-dimensional mass-preservation equation for local birth--death dynamics gives the conditional target rates
\begin{equation}
\bar\lambda_{t,i}
=\frac{(x_{1,i}-X_t^{(i)})_+}{1-t},
\qquad
\bar\mu_{t,i}
=\frac{(X_t^{(i)}-x_{1,i})_+}{1-t},
\label{eq:conditional-bridge-rates}
\end{equation}
where $(\cdot)_+=\max(\cdot,0)$. These rates are singular at $t=1$, therefore we set $\tau=1-\varepsilon$ for a small $\varepsilon>0$ and train on $t\in[0,\tau]$. The diagonal loss is
\begin{equation}
\mathcal{L}_{\mathrm{diag}}(\theta)
=
\mathbb{E}_{\substack{
(X_0,X_1)\sim\pi,\\
t\sim\operatorname{Unif}[0,\tau],\\
X_t\sim p_t(\cdot\mid X_0,X_1)
}}
\left[
\sum_{i=1}^d
\ell\!\left(\bar\lambda_{t,i},\lambda_{\theta,i}(X_t,t)\right)
+
\sum_{i=1}^d
\ell\!\left(\bar\mu_{t,i},\mu_{\theta,i}(X_t,t)\right)
\right].
\label{eq:diagonal-rate-loss}
\end{equation}
where $\ell(a,b):=b-a\log b$. A small numerical floor inside the logarithm is used in implementation.

The corresponding marginal rates are
$\lambda_i^\star(x,t)=\mathbb{E}[\bar\lambda_{t,i}\mid X_t=x]$
and
$\mu_i^\star(x,t)=\mathbb{E}[\bar\mu_{t,i}\mid X_t=x]$.
Although the conditional bridge is independent across coordinates given
$(X_0,X_1)$, the marginal rates depend on the full state $x$ and can
therefore capture cross-coordinate dependence. Generator matching shows
that these rates generate the marginal probability path
\citep{holderrieth2025generator-matching}. Moreover, the conditional
objective in Equation~\eqref{eq:diagonal-rate-loss} has the same population
optimum as directly matching $\lambda_i^\star$ and $\mu_i^\star$. Thus the
diagonal part of Count Flow Map training is exactly count-FM and identifies
the infinitesimal generator $Q_t^\star$ of the prescribed count path. The derivation is provided in Appendix~\ref{app:count-fm-diagonal-matching}.

\paragraph{Off-diagonal Chapman--Kolmogorov consistency.}
For any $0\le s<u<t\le\tau$, Chapman--Kolmogorov consistency requires the direct transition over $[s,t]$ to agree with the composition of transitions over $[s,u]$ and $[u,t]$. To construct the self-distillation target, let $\widetilde K_\theta$ denote $K_\theta$ with gradients stopped. Starting from $X_s\sim p_s$, draw $Z\sim\widetilde K_{\theta,s,u}(\cdot\mid X_s)$ and then $Y\sim\widetilde K_{\theta,u,t}(\cdot\mid Z)$. Equivalently, $Y$ is drawn from the composed target kernel
$R_{\theta;s,u,t}=\widetilde K_{\theta,s,u}\widetilde K_{\theta,u,t}$. We fit the direct transition with
\begin{equation}
\mathcal{L}_{\mathrm{CK}}(\theta)
=
-\mathbb{E}_{\substack{
X_s\sim p_s,\\
Y\sim R_{\theta;s,u,t}(\cdot\mid X_s)
}}
\left[
\log K_{\theta,s,t}(Y\mid X_s)
\right].
\label{eq:chapman-kolmogorov-loss}
\end{equation}
Because gradients are stopped through $R_{\theta;s,u,t}$, the CK loss differs from the expected KL divergence $\mathbb{E}_{X_s\sim p_s}[D_{\mathrm{KL}}(R_{\theta;s,u,t}(\cdot\mid X_s)\Vert K_{\theta,s,t}(\cdot\mid X_s))]$ only by the target entropy, so minimizing $\mathcal{L}_{\mathrm{CK}}$ is equivalent to minimizing this expected KL divergence.
Since $\widetilde K_\theta$ and $K_\theta$ have identical values, zero KL discrepancy implies $K_{\theta,s,t}=K_{\theta,s,u}K_{\theta,u,t}$.

The complete objective is
\begin{equation}
\mathcal{L}(\theta)
=\mathcal{L}_{\mathrm{diag}}(\theta)
+\alpha\mathcal{L}_{\mathrm{CK}}(\theta),
\label{eq:count-flow-map-loss}
\end{equation}
where $\alpha\ge0$ controls the off-diagonal consistency term. In this paper, we set
$\alpha=1$ in all experiments. The diagonal loss identifies the local generator, while the Chapman--Kolmogorov loss penalizes deviations between direct and composed finite-time transitions.

\begin{corollary}[Identification by diagonal and Chapman--Kolmogorov consistency]
\label{cor:diagonal-ck-identification}
Suppose that $Q_t^\star$ generates a unique nonexplosive transition family $P^\star_{s,t}$. If a Count Flow Map $K_{s,t}$ has diagonal generator $Q_t^\star$ and satisfies $K_{s,t}=K_{s,u}K_{u,t}$ for every $s\le u\le t$, then $K_{s,t}=P^\star_{s,t}$.
\end{corollary}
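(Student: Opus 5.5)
The plan is to reduce the corollary to Proposition~\ref{prop:count-flow-map-characterization}, using the Chapman--Kolmogorov branch of that characterization. The hypotheses of the corollary already provide most of what the proposition requires. $Q_t^\star$ generates a unique nonexplosive family $P^\star_{s,t}$. $K_{s,t}$ satisfies $K_{s,t}=K_{s,u}K_{u,t}$. And $K_{s,t}$ has diagonal generator $Q_t^\star$, meaning $\lim_{h\downarrow0}(K_{t,t+h}-I)/h=Q_t^\star$ in the sense of \eqref{eq:count-flow-map-generator}. What remains is the normalization $K_{t,t}=I$ and the regularity conditions of Appendix~\ref{app:flow-map-consistency-proof}.

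\textbf{Step 1: normalization.} The existence of the limit defining $Q_t^\star$ forces $K_{t,t+h}\to I$ as $h\downarrow0$. Setting $s=u=t$ in the semigroup identity gives $K_{t,t}=K_{t,t}^2$, so $K_{t,t}$ is idempotent. To conclude $K_{t,t}=I$, I write $K_{t,t+h}=K_{t,t}K_{t,t+h}$ and let $h\downarrow0$, which gives $I=K_{t,t}\cdot I$. Passing to this limit requires entrywise convergence and exchanging the limit with the sum over intermediate states. This is justified by dominated convergence, since $K_{t,t}(\cdot\mid x)$ is a probability vector and the entries of $K_{t,t+h}$ are bounded by $1$. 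For a Count Flow Map as defined in \eqref{eq:count-flow-map}, $K_{t,t}=I$ also holds directly, because $\mathbb{P}(X_t=y\mid X_t=x)=\delta_{xy}$.

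\textbf{Step 2: apply the proposition.} With $K_{t,t}=I$, the semigroup identity, and generator $Q_t^\star$ in hand, Proposition~\ref{prop:count-flow-map-characterization} yields $K_{s,t}=P^\star_{s,t}$.

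If one instead wants to see the mechanism behind the proposition, the route goes through the backward equation. Fix $s<t$ and compute the right difference quotient in $s$ using Chapman--Kolmogorov:
\[
\frac{K_{s+h,t}-K_{s,t}}{h}=-\frac{K_{s,s+h}-I}{h}\,K_{s+h,t}.
\]
As $h\downarrow0$ this tends to $-Q_s^\star K_{s,t}$, which is the backward Kolmogorov equation with terminal condition $K_{t,t}=I$. Uniqueness of solutions to that equation, which is where nonexplosiveness and minimality of $P^\star$ enter, then gives $K=P^\star$.

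\textbf{Main obstacle.} The delicate point is the limit interchange in this backward derivation. On the unbounded space $\mathbb{N}_0^d$, convergence of $(K_{s,s+h}-I)/h$ applied to $K_{s+h,t}(\cdot)$ requires the rates to be locally bounded and $K_{s+h,t}\to K_{s,t}$ pointwise. The left derivative and continuity in $s$ need a similar argument. I would take these requirements to be exactly the regularity conditions of Appendix~\ref{app:flow-map-consistency-proof}, so that the corollary amounts to checking that those hypotheses hold under the assumptions stated here.
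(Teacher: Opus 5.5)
Your proposal is correct and follows the paper's route: the paper also obtains the corollary directly from the Chapman--Kolmogorov branch of Proposition~\ref{prop:count-flow-map-characterization}, and your explicit check that $K_{t,t}=I$, which the paper leaves implicit, is a sound addition. Your sketched mechanism goes through the backward equation via $K_{s,t}=K_{s,s+h}K_{s+h,t}$, whereas the appendix differentiates $K_{s,t+h}=K_{s,t}K_{t,t+h}$ to obtain the forward equation; either uniqueness argument suffices under the stated regularity conditions.
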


Corollary~\ref{cor:diagonal-ck-identification} follows directly from Proposition~\ref{prop:count-flow-map-characterization}. In training, we set $u=(s+t)/2$ in the Chapman--Kolmogorov loss. Repeated midpoint composition recursively reduces a finite interval to shorter subintervals, while Proposition~\ref{prop:diagonal-generator-consistency} anchors the transition kernel to the learned generator in the short-interval limit. Appendix~\ref{app:finite-time-map-error} shows how these two properties control the finite-time transition error when midpoint consistency is approximate.

\paragraph{Terminal error decomposition.}
Let $P^\theta_{s,t}$ denote the exact transition family generated by the learned diagonal generator $Q_t^\theta$. For an inference partition $\Pi=\{0=t_0<\cdots<t_L=\tau\}$, write $K_\theta^\Pi:=K_{\theta,t_0,t_1}\cdots K_{\theta,t_{L-1},t_L}$. Appendix~\ref{app:terminal-error-analysis} shows
\begin{equation}
\begin{aligned}
D_{\mathrm{TV}}(p_1,p_0K_\theta^\Pi)
\le{}&
\underbrace{\varepsilon\,\mathbb{E}_{\pi}\|X_1-X_0\|_1}_{\text{endpoint truncation}}
+
\underbrace{\sqrt{\frac{\tau}{2}\left[\mathcal{L}_{\mathrm{diag}}(\theta)-\mathcal{L}_{\mathrm{diag}}^\star\right]}}_{\text{generator estimation}}
+
\underbrace{\mathcal{E}_{\mathrm{map}}(\Pi)}_{\text{finite-time map}},
\end{aligned}
\label{eq:terminal-error-bound}
\end{equation}
where $\mathcal{L}_{\mathrm{diag}}^\star$ is the population diagonal loss at the exact marginal rates and $\mathcal{E}_{\mathrm{map}}(\Pi):=D_{\mathrm{TV}}(p_0P^\theta_{0,\tau},p_0K_\theta^\Pi)$. 

The endpoint and generator-estimation terms follow the count-FM error analysis and the path-space relative-entropy argument for jump processes \citep{wei2026count-flow-matching,arampatzis2015pathwise-sensitivity}. The finite-time term is controlled by local transition errors and midpoint Chapman--Kolmogorov residuals in Appendix~\ref{app:finite-time-map-error}.

\subsection{Sample generation}
\label{subsec:count-flow-map-generation}

After training, the Count Flow Map supports different inference budgets
without retraining. Given a partition
$0=t_0<t_1<\cdots<t_L=\tau$, we sample $X_{t_0}\sim p_0$ and sequentially
apply $K_{\theta,t_k,t_{k+1}}$ for $k=0,\ldots,L-1$.
Thus, $L=1$ gives direct one-step generation with $K_{\theta,0,\tau}$,
while $L>1$ performs multi-step generation over shorter time intervals.
Under exact Chapman--Kolmogorov consistency, these choices give the same
finite-time transition. For conditional generation, covariates $c$ are
provided as additional model inputs, giving
$K_{\theta,s,t}(\cdot\mid x,c)$, with the same sampling procedure.
All experiments were conducted on a single NVIDIA GeForce RTX 2080 Ti GPU.

\section{Simulation}
\label{sec:simulation}

Here, we consider two simulation settings to evaluate endpoint quality and inference
efficiency across different numbers of function evaluations (NFE). We begin
with a 2-D example, which is simple and easy to visualize. We then consider a
32-dimensional high-count setting to assess performance on a more challenging
count distribution.

We compare Count Flow Map with Count-FM using its original unit-jump sampler
\citep{wei2026count-flow-matching} and a binomial $\tau$-leap variant motivated
by binomial leap methods \citep{tian2004binomial-leap}, CountsDiff
\citep{soatto2026countsdiff}, D3PM with ordinal Gaussian transitions
\citep{austin2021d3pm}, and Discrete Flow Maps
\citep{potaptchik2026discrete-flow-maps}. The two Count-FM samplers use the same trained rate
network and differ only at inference. All learned models are trained for
10,000 updates over five random seeds, using MLPs with hidden width 256 and
depth 4 for a fair comparison. We evaluate NFE in
$\{1,2,4,16,64,128,256\}$. Additional implementation and evaluation details
are given in Appendix~\ref{app:simulation-details}.

\paragraph{2-D simulation.}
The target is an equal-weight mixture of two Gamma--Poisson ($\mathrm{GP}$) distributions,
$\mathrm{GP}((60,5),(160,80))$ and
$\mathrm{GP}((60,40),(160,140))$, where $\mathrm{GP}(m,\alpha)$ denotes a
Gamma--Poisson distribution with mean $m$ and concentration $\alpha$.
For Count Flow Map and Count-FM, the source is
$p_0=\mathrm{Unif}\{0,\ldots,72\}^2$, and the signed-binomial bridge is
truncated at $\tau=0.98$. The remaining baselines follow their original
reference distributions, with details given in
Appendix~\ref{app:simulation-details}. We report total variation (TV) distance
and empirical $W_2$. Figure~\ref{fig:sim-2d-efficiency} shows the quality--efficiency curves, with detailed values at selected operating points
reported in Appendix~\ref{app:simulation-2d-results}.

Count Flow Map achieves the best one-step endpoint quality (NFE=1), with
TV $0.153\pm0.005$ and $W_2=2.50\pm0.34$. Its performance is also stable
across inference budgets, with TV between $0.126$ and $0.153$ and $W_2$
between $2.50$ and $2.74$ from NFE=1 to 256. In contrast, Count-FM,
CountsDiff, and D3PM require substantially more inference steps to reach
comparable quality, while Discrete Flow Maps is competitive at low NFE,
particularly at NFE=4. These results show that Count Flow Map retains high
endpoint quality with one or a few evaluations and is robust to the number of
inference steps. This stability is consistent with the learned
Chapman--Kolmogorov constraint.

Endpoint samples and generation paths are shown in
Appendix~\ref{app:simulation-2d-visuals}. The endpoint samples again show that
Count Flow Map captures the two target modes at NFE=1, while Count-FM,
CountsDiff, and D3PM require more steps to approach the target. The intermediate
paths further show that Count Flow Map, similar to Count-FM, evolves smoothly
through count space. In contrast, categorical-state methods exhibit more abrupt
transitions, reflecting the mismatch between categorical representations and
the ordered geometry of count data.

\begin{figure}[!ht]
    \centering
    \includegraphics[width=\linewidth]{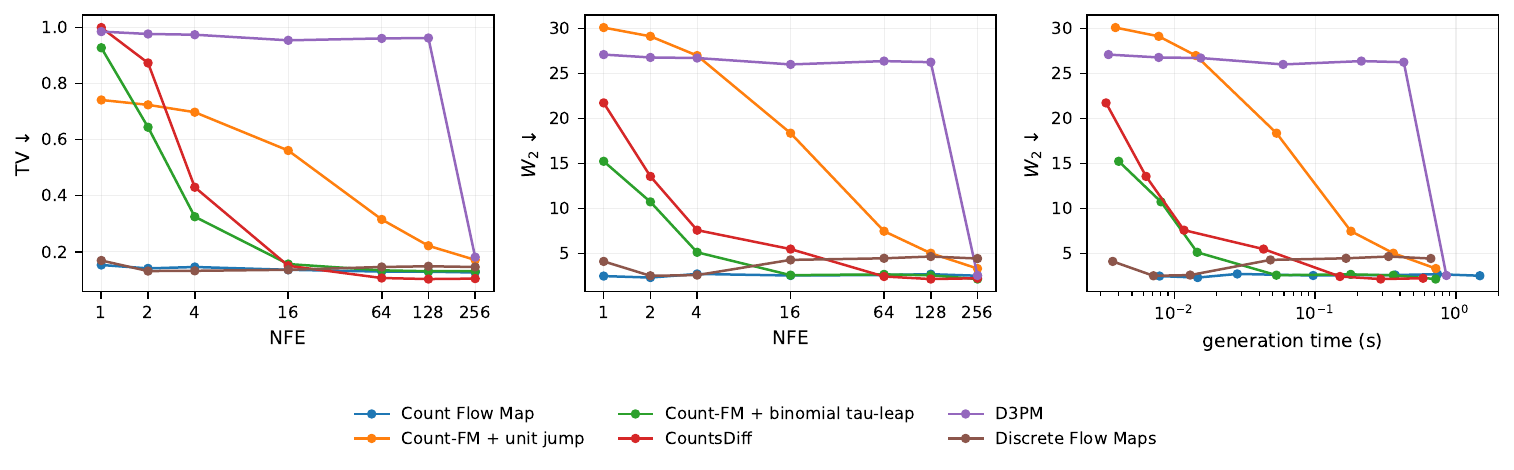}
    \caption{2-D quality--efficiency trade-offs over the full NFE grid.
    Left and center show endpoint TV and $W_2$ versus NFE.
    Right shows $W_2$ versus measured generation time.
    Count Flow Map gives the strongest one-step endpoint quality
    and remains stable across the NFE grid.
    Discrete Flow Maps becomes competitive by NFE=4 but is worse
    at NFE=1, while Count-FM, CountsDiff, and D3PM require more
    inference steps to approach their best endpoint quality.}
    \label{fig:sim-2d-efficiency}
\end{figure}

\paragraph{32-D simulation.}
We next consider a 32-dimensional high-count setting with a three-component
Gamma--Poisson factor target and four latent factors per component. For Count
Flow Map and Count-FM, the source is
$p_0=\mathrm{Unif}\{0,\ldots,50\}^{32}$. The same pattern persists in this
more challenging setting. At NFE=1, Count Flow Map achieves
$\mathrm{SW}_2=0.556\pm0.134$, compared with $4.511\pm0.148$ for the
next-best method, and it maintains the lowest $\mathrm{SW}_2$ across all
evaluated NFEs. It also achieves the lowest
$\mathrm{MMD}^2_{\mathrm{RBF}}$ through NFE=128. Overall, the 32-D results show the strong low-NFE
performance of Count Flow Map in a more challenging high-dimensional,
high-count setting. Full results are given in
Appendix~\ref{app:simulation-32d}.

% \section{Application to Single-Cell Drug Perturbation Prediction}
% \label{sec:application-single-cell}

\section{Applications}
\label{sec:applications}

\subsection{Single-Cell Drug Perturbation Prediction}
\label{sec:application-single-cell}

We evaluate held-out drug responses in Tahoe-100M
\citep{zhang2025tahoe100m}, generating gene-expression counts
conditioned on cell line, drug, and dose. The panel contains
70,064 treated cells, five cell lines, eight drugs, three doses,
and 2,000 training-selected highly variable genes. We split
the 120 conditions into 78 training, 18 validation, and 24 test
conditions, withholding all treated cells from each held-out
condition. Every held-out cell-line--drug pair is observed at
another dose during training. Sources are cell-line- and
plate-matched DMSO controls. Full split and preprocessing details
are in Appendix~\ref{app:scrna-details}.

We compare Count Flow Map with Count-FM's unit-jump and binomial
$\tau$-leap samplers
\citep{wei2026count-flow-matching,tian2004binomial-leap},
a conditional NB-VAE \citep{sohn2015conditional-vae,lopez2018scvi},
CPA \citep{lotfollahi2023cpa}, scGen \citep{lotfollahi2019scgen},
scVIDR \citep{kana2023scvidr}, Sinkhorn OT with dose interpolation
\citep{cuturi2013sinkhorn}, and linear dose response.
Both count-flow models use either independent or OT training couplings between matched DMSO and treated cells, following OT-CFM and count-FM \citep{tong2024ot-cfm,wei2026count-flow-matching}. For OT coupling, treated endpoints are permuted within each perturbation condition and experimental plate to minimize the total raw-count $L_1$ distance.
We report Count Flow Map at fixed 1 and 16 NFE, while Count-FM's
NFE is selected separately on validation for each sampler,
coupling, and seed. Neural checkpoints are selected by validation
sliced $W_2$.

\paragraph{Held-out perturbation prediction.}
Table~\ref{tab:scrna-endpoint-app} reports distributional accuracy
via sliced $W_2$ in a training-only PCA space
\citep{bonneel2015sliced-wasserstein}, response recovery via log
fold-change (logFC) correlation and top-200 gene overlap
\citep{liang2025scppdm}, and generation time. Metric definitions
are in Appendix~\ref{app:scrna-details}.

\begin{table}[t]
\centering
\small
\setlength{\tabcolsep}{3pt}
\resizebox{\linewidth}{!}{%
\begin{tabular}{lcccccc}
\toprule
Method & Coupling & NFE & SW$_2\downarrow$ & logFC $r\uparrow$
& Top overlap$\uparrow$ & Time (s)$\downarrow$ \\
\midrule
Count Flow Map & \multirow{4}{*}{Indep.} & 1
& $0.448\pm0.006$ & $0.779\pm0.003$
& $0.362\pm0.011$ & $0.21\pm0.00$ \\
Count Flow Map & & 16
& $0.429\pm0.013$ & $0.771\pm0.006$
& $0.386\pm0.008$ & $1.62\pm0.00$ \\
Count-FM (unit jump) & & 128--512
& $0.424\pm0.006$ & $0.782\pm0.005$
& $\mathbf{0.396\pm0.003}$ & $10.85\pm7.05$ \\
Count-FM ($\tau$-leap) & & 128
& $0.430\pm0.007$ & $\mathbf{0.786\pm0.006}$
& $0.394\pm0.005$ & $5.47\pm0.08$ \\
\midrule
Count Flow Map & \multirow{4}{*}{OT} & 1
& $\mathbf{0.394\pm0.002}$ & $0.779\pm0.006$
& $0.389\pm0.006$ & $0.20\pm0.00$ \\
Count Flow Map & & 16
& $0.423\pm0.003$ & $0.764\pm0.006$
& $0.381\pm0.006$ & $1.60\pm0.01$ \\
Count-FM (unit jump) & & 256--512
& $0.430\pm0.017$ & $0.760\pm0.009$
& $0.383\pm0.006$ & $15.26\pm5.21$ \\
Count-FM ($\tau$-leap) & & 128--256
& $0.434\pm0.015$ & $0.762\pm0.004$
& $0.376\pm0.007$ & $6.90\pm2.94$ \\
\midrule
Conditional NB-VAE & -- & 1-pass
& $0.451\pm0.018$ & $0.773\pm0.012$
& $0.393\pm0.014$ & $0.16\pm0.01$ \\
CPA & -- & 1-pass
& $2.594\pm0.120$ & $0.342\pm0.020$
& $0.191\pm0.006$ & $0.48\pm0.00$ \\
scGen & -- & 1-pass
& $1.787\pm0.016$ & $0.317\pm0.008$
& $0.159\pm0.001$ & $0.19\pm0.01$ \\
scVIDR & -- & 1-pass
& $1.950\pm0.026$ & $0.274\pm0.003$
& $0.159\pm0.002$ & $0.19\pm0.01$ \\
Sinkhorn OT & -- & 1-pass
& $1.540$ & $0.336$ & $0.196$ & $0.71$ \\
Linear dose-response & -- & 1-pass
& $1.113$ & $0.264$ & $0.160$ & $0.17$ \\
\bottomrule
\end{tabular}%
}
\caption{Held-out perturbation prediction. Quality metrics are
averaged equally across 24 test conditions. For neural models, we report mean $\pm$ sample standard deviation across three training seeds. These repeated runs assess neural-network training variability, while the non-neural Linear and Sinkhorn baselines are evaluated once. Count-FM's NFE is selected on validation for each seed. Ranges show the lowest and highest selected NFE.
Time is the number of seconds needed to generate 9,600 cells,
excluding setup and metric computation. Evaluation and timing details can be found in Appendix~\ref{app:scrna-details}.}
\label{tab:scrna-endpoint-app}
\end{table}

With OT coupling, one-step Count Flow Map achieves the lowest
mean sliced $W_2$ ($0.394\pm0.002$), generating 9,600 cells in
approximately $0.20$ seconds. Its logFC correlation ($0.779$)
and top-gene overlap ($0.389$) are close to the best reported
means ($0.786$ and $0.396$). OT improves its one-step
distributional accuracy and top-gene overlap while leaving mean
logFC correlation approximately unchanged. Increasing from
1 to 16 NFE improves sliced $W_2$ under independence coupling
but worsens it under OT. Approximate Chapman--Kolmogorov
consistency may allow transition errors to accumulate differently
under these couplings. Overall, Count Flow Map combines strong
distributional accuracy and competitive response recovery with
very few model evaluations.

\paragraph{Inference efficiency and effect recovery.}
Figure~\ref{fig:scrna-efficiency}A--B shows distributional accuracy
versus sampling cost. Panel C compares perturbation strengths
relative to matched DMSO controls across all 24 test conditions
in one fixed run. Observed and generated strengths correlate at
$r=0.65$ under independence coupling and $r=0.81$ under OT.
Fitted slopes of $0.88$ and $0.75$ indicate underestimation,
greater under OT despite its higher correlation.
Appendix~\ref{app:scrna-examples} examines gene-response patterns
by condition at both sampling budgets.

\begin{figure}[t]
    \centering
    \includegraphics[width=\linewidth]{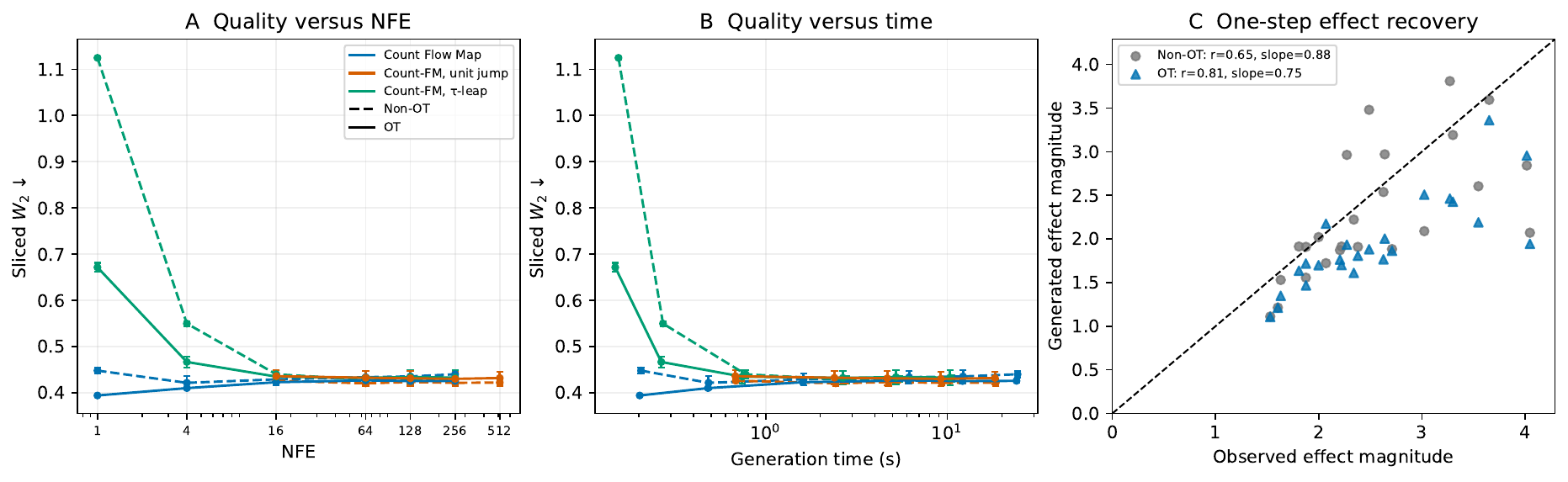}
    \caption{Held-out single-cell prediction and generation cost. A. Sliced $W_2$ versus NFE. B. Sliced $W_2$ versus generation time for 9,600 cells. C. Observed versus generated perturbation strengths at
    1 NFE under both couplings, using one fixed training seed.
    Each point represents one of the 24 test conditions.
    The dashed line indicates perfect agreement.
    $r$ is Pearson correlation across condition magnitudes,
    and the slope is fitted through the origin.
    In A--B, error bars show one sample standard deviation
    across three training seeds, colors identify the model
    or sampler, and solid and dashed lines indicate OT and
    independence coupling, respectively.
    Effect definitions are provided in
    Appendix~\ref{app:scrna-examples}.}
    \label{fig:scrna-efficiency}
\end{figure}

\subsection{Neural Population Forecasting}
\label{sec:application-neural}

We next study temporal prediction of neural population activity using six 50\,ms-binned recordings from the SpikeProphecy release \citep{steinmetz2019distributed,spikeprophecy2026data}. The recordings contain 60,569--68,927 population count vectors and 490--1,104 neurons each, with 385,019 count vectors in total.
The task is to forecast the joint population count vector in the next 50\,ms bin from the preceding 500\,ms of spike-count history. Within each recording, let $Y_b\in\mathbb{N}_0^D$ denote the population vector at bin $b$, where $D$ is the recording-specific number of neurons, and let $H_b=(Y_{b-9},\ldots,Y_b)$ denote the 10-bin history window. We model the next-bin forecast as $\widehat Y_{b+1}\sim K_{\theta,0,\tau}(\cdot\mid Y_b,H_b)$, where $[0,\tau]$ denotes flow time rather than physical time. Because the neuron dimension differs across recordings, we train a separate model for each recording using a chronological 70\%/15\%/15\% train/validation/test split. The models condition only on spike-count history. 
We evaluate 4,096 test histories per recording, with 64 generated draws for each history. 
Metrics are computed separately for each recording and averaged equally across the six recordings within each training seed. We report the mean and standard deviation of these metrics across three seeds.

\paragraph{Forecast quality and computational cost.}
Figure~\ref{fig:neural-scientific}A--C compares forecast quality
and computational cost within the count-flow family, using
the same selected checkpoint at every sampling budget.
Joint forecast quality is assessed by dimension-normalized
energy score, while the continuous ranked probability score
(CRPS) evaluates the distribution of mean count per neuron
\citep{gneiting2007proper}.
At 1 NFE, Count Flow Map achieves an energy score of
$0.3933\pm0.0004$ and population CRPS of
$0.02292\pm0.00034$, taking $2.93$\,ms per 64-draw forecast.
Count-FM at 8 NFE attains population CRPS of $0.02248$
with unit jumps and $0.02242$ with binomial $\tau$-leaping,
taking $6.83$ and $8.12$\,ms, respectively.
Count Flow Map is therefore $2.3$--$2.8\times$ faster,
at the cost of approximately $2\%$ higher population CRPS.
Its energy score is comparable to the corresponding Count-FM
scores of $0.3967$ and $0.3934$.
Increasing Count Flow Map to 16 NFE improves population CRPS
to $0.02246$, while its energy score changes little
to $0.3938$.
The full numerical comparison appears in
Table~\ref{tab:neural-full} in Appendix~\ref{app:neural}.

\paragraph{High-activity events and predictive dependence.}
Joint samples estimate the probability of high-activity events,
defined by a training-only 95th-percentile threshold on total
population counts. To assess the contribution of dependence,
we independently shuffle each neuron's generated draws within
each history, preserving its empirical marginal distribution.
Shuffling increases the mean event Brier score
\citep{brier1950verification} by $0.00085$ at 1 NFE and
$0.00127$ at 16 NFE
(Figure~\ref{fig:neural-scientific}D).
Retaining dependence therefore improves this population-level
forecast, with the size of the benefit varying across recordings.

\begin{figure}[t]
    \centering
    \includegraphics[width=\linewidth]
        {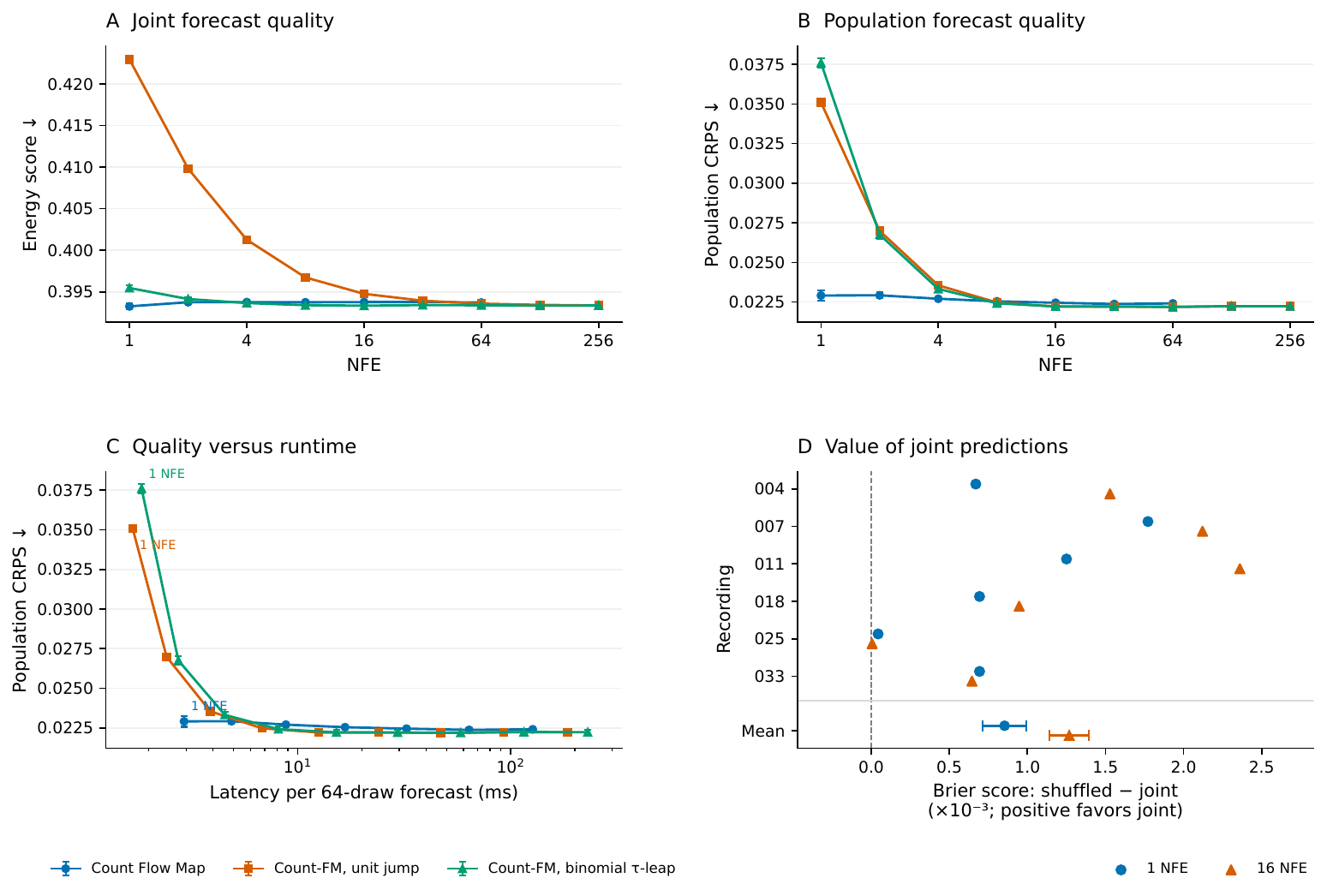}
    \caption{Neural forecast quality, computational cost, and
    predictive dependence.
    A. Dimension-normalized energy score for the joint count
    vector versus NFE.
    B. Population CRPS for mean count per neuron versus NFE.
    C. Population CRPS versus latency per 64-draw forecast.
    D. High-activity event Brier score after marginal-preserving
    neuron shuffling minus the joint-sample score.
    Positive values favor joint predictions.
    A--C use the full evaluated grids, covering 1--64 NFE for
    Count Flow Map and 1--256 NFE for Count-FM.
    Recording points in D average three training seeds.
    Aggregate results in A--C and the mean row in D average
    six recordings equally within each seed.
    Error bars show one sample standard deviation across
    the three seed-level means.}
    \label{fig:neural-scientific}
\end{figure}

\section{Conclusion}
\label{sec:conclusion}

In this work, we introduced Count Flow Map, a finite-time extension of count-FM for efficient generation directly in count space. By learning transitions that replace many local birth--death updates, Count Flow Map supports one- or few-step generation while keeping
samples nonnegative and integer-valued without a predefined maximum count. We developed a Poisson--Binomial transition model trained through diagonal rate matching and off-diagonal self-distillation.
Across simulations, single-cell drug perturbation prediction, and neural population forecasting, Count Flow Map achieved competitive distributional quality with one or a few model evaluations.
The single-cell results further demonstrated generation for held-out dose conditions with strong distributional accuracy and competitive perturbation-response recovery.
The neural results demonstrated efficient one-step population forecasting and showed that retaining dependence among generated neuron counts improves prediction of high-activity events.

Although Count Flow Map performed well across our experiments, the current formulation still has several limitations. First, the Poisson--Binomial mixture provides a tractable finite-time transition
family, but richer count-valued kernels may better capture complex transitions over long intervals while preserving the same local birth--death dynamics. Second, Chapman--Kolmogorov consistency is enforced approximately through self-distillation, so improving this consistency could reduce discrepancies between direct and multi-step generation. Self-conditioning \citep{chen2023analog-bits} and
fixed-point flow formulations \citep{yoo2026fixed-point-flows} suggest possible extensions, but their adaptation to count-valued
transition kernels and their effects on consistency remain to be investigated. Finally, the signed-binomial bridge inherits the endpoint singularity of count-FM \citep{wei2026count-flow-matching},
so the current sampler truncates before $t=1$. Count bridges with better endpoint behavior \citep{fishman2026count-bridges} could reduce the resulting truncation error.

In summary, Count Flow Map provides a direct and efficient approach to count-valued
generation. By extending local birth--death dynamics to finite-time
transitions, it supports one- or few-step sampling while preserving
nonnegative integer counts.

% \begin{ack}
% % Use unnumbered first level headings for the acknowledgments. All acknowledgments
% % go at the end of the paper before the list of references. Moreover, you are required to declare
% % funding (financial activities supporting the submitted work) and competing interests (related financial activities outside the submitted work).
% % More information about this disclosure can be found at: \url{https://neurips.cc/Conferences/2026/PaperInformation/FundingDisclosure}.

% % Do {\bf not} include this section in the anonymized submission, only in the final paper. You can use the \texttt{ack} environment provided in the style file to automatically hide this section in the anonymized submission.
% \end{ack}

\bibliographystyle{abbrvnat}
\bibliography{count_flow_map_method}

@article{boffi2025flow-map-matching,
  title   = {Flow Map Matching with Stochastic Interpolants: A Mathematical Framework for Consistency Models},
  author  = {Boffi, Nicholas Matthew and Albergo, Michael Samuel and Vanden-Eijnden, Eric},
  journal = {Transactions on Machine Learning Research},
  year    = {2025},
  url     = {https://openreview.net/forum?id=cqDH0e6ak2},
}

@inproceedings{boffi2025flow-map-self-distillation,
  title     = {How to Build a Consistency Model: Learning Flow Maps via Self-Distillation},
  author    = {Boffi, Nicholas M. and Albergo, Michael S. and Vanden-Eijnden, Eric},
  booktitle = {Advances in Neural Information Processing Systems},
  volume    = {38},
  pages     = {37559--37595},
  year      = {2025},
  doi       = {10.52202/085713-1121},
  url       = {https://papers.nips.cc/paper_files/paper/2025/hash/2ff3605c9e442d55156d99fe4f5f228d-Abstract-Conference.html},
}

@misc{potaptchik2026discrete-flow-maps,
  title        = {Discrete Flow Maps},
  author       = {Potaptchik, Peter and Yim, Jason and Saravanan, Adhi and Holderrieth, Peter and Vanden-Eijnden, Eric and Albergo, Michael S.},
  howpublished = {arXiv preprint arXiv:2604.09784},
  year         = {2026},
  eprint       = {2604.09784},
  archivePrefix= {arXiv},
  primaryClass = {stat.ML},
  doi          = {10.48550/arXiv.2604.09784},
  url          = {https://arxiv.org/abs/2604.09784}
}

@inproceedings{roos2026categorical-flow-maps,
  title     = {Categorical Flow Maps},
  author    = {Roos, Daan and Davis, Oscar and Eijkelboom, Floor and Bronstein, Michael and Welling, Max and Ceylan, {\.{I}}smail {\.{I}}lkan and Ambrogioni, Luca and {van de Meent}, Jan-Willem},
  booktitle = {International Conference on Machine Learning},
  year      = {2026},
  url       = {https://arxiv.org/abs/2602.12233}
}

@inproceedings{song2023consistency-models,
  title     = {Consistency Models},
  author    = {Song, Yang and Dhariwal, Prafulla and Chen, Mark and Sutskever, Ilya},
  booktitle = {Proceedings of the 40th International Conference on Machine Learning},
  editor    = {Krause, Andreas and Brunskill, Emma and Cho, Kyunghyun and Engelhardt, Barbara and Sabato, Sivan and Scarlett, Jonathan},
  series    = {Proceedings of Machine Learning Research},
  volume    = {202},
  pages     = {32211--32252},
  publisher = {PMLR},
  year      = {2023},
  url       = {https://proceedings.mlr.press/v202/song23a.html},
}

@inproceedings{salimans2022progressive-distillation,
  title     = {Progressive Distillation for Fast Sampling of Diffusion Models},
  author    = {Salimans, Tim and Ho, Jonathan},
  booktitle = {International Conference on Learning Representations},
  year      = {2022},
  url       = {https://openreview.net/forum?id=TIdIXIpzhoI},
}

@inproceedings{kim2024consistency-trajectory,
  title     = {Consistency Trajectory Models: Learning Probability Flow {ODE} Trajectory of Diffusion},
  author    = {Kim, Dongjun and Lai, Chieh-Hsin and Liao, Wei-Hsiang and Murata, Naoki and Takida, Yuhta and Uesaka, Toshimitsu and He, Yutong and Mitsufuji, Yuki and Ermon, Stefano},
  booktitle = {International Conference on Learning Representations},
  year      = {2024},
  url       = {https://proceedings.iclr.cc/paper_files/paper/2024/hash/c204d12afa0175285e5aac65188808b4-Abstract-Conference.html},
}

@inproceedings{frans2025shortcut-models,
  title     = {One Step Diffusion via Shortcut Models},
  author    = {Frans, Kevin and Hafner, Danijar and Levine, Sergey and Abbeel, Pieter},
  booktitle = {International Conference on Learning Representations},
  year      = {2025},
  url       = {https://openreview.net/forum?id=OlzB6LnXcS},
}

@inproceedings{lipman2023flow-matching,
  title     = {Flow Matching for Generative Modeling},
  author    = {Lipman, Yaron and Chen, Ricky T. Q. and Ben-Hamu, Heli and Nickel, Maximilian and Le, Matt},
  booktitle = {International Conference on Learning Representations},
  year      = {2023},
  url       = {https://openreview.net/forum?id=PqvMRDCJT9t},
}

@article{albergo2025stochastic-interpolants,
  title   = {Stochastic Interpolants: A Unifying Framework for Flows and Diffusions},
  author  = {Albergo, Michael S. and Boffi, Nicholas M. and Vanden-Eijnden, Eric},
  journal = {Journal of Machine Learning Research},
  volume  = {26},
  number  = {209},
  pages   = {1--80},
  year    = {2025},
  url     = {https://jmlr.org/papers/v26/23-1605.html},
}

@inproceedings{holderrieth2025generator-matching,
  title     = {Generator Matching: Generative Modeling with Arbitrary {Markov} Processes},
  author    = {Holderrieth, Peter and Havasi, Marton and Yim, Jason and Shaul, Neta and Gat, Itai and Jaakkola, Tommi and Karrer, Brian and Chen, Ricky T. Q. and Lipman, Yaron},
  booktitle = {International Conference on Learning Representations},
  year      = {2025},
  url       = {https://proceedings.iclr.cc/paper_files/paper/2025/hash/819aaee144cb40e887a4aa9e781b1547-Abstract-Conference.html},
}

@inproceedings{wei2026count-flow-matching,
  title     = {Flow Matching for Count Data},
  author    = {Wei, Ganchao and Pearson, John},
  booktitle = {The Fortieth Annual Conference on Neural Information Processing Systems},
  year      = {2026},
  url       = {https://openreview.net/forum?id=5N0iGXXzEJ}
}

@inproceedings{campbell2022continuous-time-discrete,
  title     = {A Continuous Time Framework for Discrete Denoising Models},
  author    = {Campbell, Andrew and Benton, Joe and De Bortoli, Valentin and Rainforth, Thomas and Deligiannidis, George and Doucet, Arnaud},
  booktitle = {Advances in Neural Information Processing Systems},
  volume    = {35},
  pages     = {28266--28279},
  year      = {2022},
  doi       = {10.52202/068431-2049},
  url       = {https://proceedings.neurips.cc/paper_files/paper/2022/hash/b5b528767aa35f5b1a60fe0aaeca0563-Abstract-Conference.html},
}

@inproceedings{gat2024discrete-flow-matching,
  title     = {Discrete Flow Matching},
  author    = {Gat, Itai and Remez, Tal and Shaul, Neta and Kreuk, Felix and Chen, Ricky T. Q. and Synnaeve, Gabriel and Adi, Yossi and Lipman, Yaron},
  booktitle = {Advances in Neural Information Processing Systems},
  volume    = {37},
  pages     = {133345--133385},
  year      = {2024},
  doi       = {10.52202/079017-4239},
  url       = {https://proceedings.neurips.cc/paper_files/paper/2024/hash/f0d629a734b56a642701bba7bc8bb3ed-Abstract-Conference.html},
}

@incollection{arampatzis2015pathwise-sensitivity,
  title     = {Pathwise Sensitivity Analysis in Transient Regimes},
  author    = {Arampatzis, Georgios and Katsoulakis, Markos A. and Pantazis, Yannis},
  booktitle = {Stochastic Equations for Complex Systems: Theoretical and Computational Topics},
  editor    = {Heinz, Stefan and Bessaih, Hakima},
  series    = {Mathematical Engineering},
  pages     = {105--124},
  publisher = {Springer International Publishing},
  address   = {Cham},
  year      = {2015},
  isbn      = {978-3-319-18206-3},
  doi       = {10.1007/978-3-319-18206-3_5},
  url       = {https://doi.org/10.1007/978-3-319-18206-3_5},
}

@book{norris1997markov-chains,
  title     = {Markov Chains},
  author    = {Norris, J. R.},
  series    = {Cambridge Series in Statistical and Probabilistic Mathematics},
  publisher = {Cambridge University Press},
  address   = {Cambridge},
  year      = {1997},
  isbn      = {978-0-521-48181-6},
  doi       = {10.1017/CBO9780511810633},
  url       = {https://doi.org/10.1017/CBO9780511810633}
}

@inproceedings{ho2020denoising-diffusion,
  title     = {Denoising Diffusion Probabilistic Models},
  author    = {Ho, Jonathan and Jain, Ajay and Abbeel, Pieter},
  booktitle = {Advances in Neural Information Processing Systems},
  volume    = {33},
  pages     = {6840--6851},
  year      = {2020},
  url       = {https://proceedings.neurips.cc/paper/2020/hash/4c5bcfec8584af0d967f1ab10179ca4b-Abstract.html}
}

@inproceedings{austin2021d3pm,
  title     = {Structured Denoising Diffusion Models in Discrete State-Spaces},
  author    = {Austin, Jacob and Johnson, Daniel D. and Ho, Jonathan and Tarlow, Daniel and van den Berg, Rianne},
  booktitle = {Advances in Neural Information Processing Systems},
  volume    = {34},
  pages     = {17981--17993},
  year      = {2021},
  url       = {https://proceedings.neurips.cc/paper/2021/hash/958c530554f78bcd8e97125b70e6973d-Abstract.html}
}

@inproceedings{fishman2026count-bridges,
  title     = {{Count Bridges} enable Modeling and Deconvolving Transcriptomic Data},
  author    = {Fishman, Nic and Gowri, Gokul and Kumar, Tanush and Lu, Jiaqi and de Bortoli, Valentin and Gootenberg, Jonathan S. and Abudayyeh, Omar},
  booktitle = {International Conference on Learning Representations},
  year      = {2026},
  url       = {https://arxiv.org/abs/2603.04730}
}

@inproceedings{soatto2026countsdiff,
  title     = {{CountsDiff}: A Diffusion Model on the Natural Numbers for Generation and Imputation of Count-Based Data},
  author    = {Soatto, Renzo G. and Hoel, Anders and Ren, Greycen and Alam, Shorna and Bates, Stephen and Daskalakis, Nikolaos P. and Uhler, Caroline and Skoularidou, Maria},
  booktitle = {Proceedings of the 43rd International Conference on Machine Learning},
  year      = {2026},
  url       = {https://arxiv.org/abs/2604.03779}
}

@article{tian2004binomial-leap,
  title   = {Binomial leap methods for simulating stochastic chemical kinetics},
  author  = {Tian, Tianhai and Burrage, Kevin},
  journal = {The Journal of Chemical Physics},
  volume  = {121},
  number  = {21},
  pages   = {10356--10364},
  year    = {2004},
  doi     = {10.1063/1.1810475},
  url     = {https://doi.org/10.1063/1.1810475}
}

@article{zhang2025tahoe100m,
  title   = {{Tahoe-100M}: A Giga-Scale Single-Cell Perturbation Atlas for Context-Dependent Gene Function and Cellular Modeling},
  author  = {Zhang, Jesse and Ubas, Airol A. and de Borja, Richard and Svensson, Valentine and Thomas, Nicole and Thakar, Neha and others},
  journal = {bioRxiv},
  year    = {2025},
  doi     = {10.1101/2025.02.20.639398},
  url     = {https://www.biorxiv.org/content/10.1101/2025.02.20.639398v3},
  note    = {Preprint}
}

@inproceedings{sohn2015conditional-vae,
  title     = {Learning Structured Output Representation using Deep Conditional Generative Models},
  author    = {Sohn, Kihyuk and Lee, Honglak and Yan, Xinchen},
  booktitle = {Advances in Neural Information Processing Systems},
  volume    = {28},
  year      = {2015},
  url       = {https://proceedings.neurips.cc/paper_files/paper/2015/file/8d55a249e6baa5c06772297520da2051-Paper.pdf}
}

@article{lopez2018scvi,
  title   = {Deep generative modeling for single-cell transcriptomics},
  author  = {Lopez, Romain and Regier, Jeffrey and Cole, Michael B. and Jordan, Michael I. and Yosef, Nir},
  journal = {Nature Methods},
  volume  = {15},
  number  = {12},
  pages   = {1053--1058},
  year    = {2018},
  doi     = {10.1038/s41592-018-0229-2},
  url     = {https://www.nature.com/articles/s41592-018-0229-2}
}

@article{lotfollahi2023cpa,
  title   = {Predicting cellular responses to complex perturbations in high-throughput screens},
  author  = {Lotfollahi, Mohammad and Klimovskaia Susmelj, Anna and De Donno, Carlo and Hetzel, Leon and Ji, Yuge and Ibarra, Ignacio L. and Srivatsan, Sanjay R. and Naghipourfar, Mohsen and Daza, Riza M. and Martin, Beth and Shendure, Jay and McFaline-Figueroa, Jose L. and Boyeau, Pierre and Wolf, F. Alexander and Yakubova, Nafissa and G{\"u}nnemann, Stephan and Trapnell, Cole and Lopez-Paz, David and Theis, Fabian J.},
  journal = {Molecular Systems Biology},
  volume  = {19},
  number  = {6},
  pages   = {e11517},
  year    = {2023},
  doi     = {10.15252/msb.202211517},
  url     = {https://link.springer.com/article/10.15252/msb.202211517}
}

@article{lotfollahi2019scgen,
  title   = {{scGen} predicts single-cell perturbation responses},
  author  = {Lotfollahi, Mohammad and Wolf, F. Alexander and Theis, Fabian J.},
  journal = {Nature Methods},
  volume  = {16},
  number  = {8},
  pages   = {715--721},
  year    = {2019},
  doi     = {10.1038/s41592-019-0494-8},
  url     = {https://www.nature.com/articles/s41592-019-0494-8}
}

@article{kana2023scvidr,
  title   = {Generative modeling of single-cell gene expression for dose-dependent chemical perturbations},
  author  = {Kana, Omar and Nault, Rance and Filipovic, David and Marri, Daniel and Zacharewski, Tim and Bhattacharya, Sudin},
  journal = {Patterns},
  volume  = {4},
  number  = {8},
  pages   = {100817},
  year    = {2023},
  doi     = {10.1016/j.patter.2023.100817},
  url     = {https://doi.org/10.1016/j.patter.2023.100817}
}

@inproceedings{cuturi2013sinkhorn,
  title     = {{Sinkhorn} Distances: Lightspeed Computation of Optimal Transport},
  author    = {Cuturi, Marco},
  booktitle = {Advances in Neural Information Processing Systems},
  volume    = {26},
  year      = {2013},
  url       = {https://papers.nips.cc/paper/4927-sinkhorn-distances-lightspeed-computation-of-optimal-transport}
}

@misc{yoo2026fixed-point-flows,
  title         = {Self-conditioned Flow Map Language Models via Fixed-point Flows},
  author        = {Yoo, Jaehoon and Kim, Wonjung and Eijkelboom, Floor and Lee, Chanhyuk and Boffi, Nicholas M. and Hong, Seunghoon and Kim, Jinwoo},
  year          = {2026},
  howpublished  = {arXiv preprint arXiv:2607.00714},
  eprint        = {2607.00714},
  archivePrefix = {arXiv},
  primaryClass  = {cs.CL},
  doi           = {10.48550/arXiv.2607.00714},
  url           = {https://arxiv.org/abs/2607.00714}
}

@article{feinberg2014kolmogorov,
  title   = {On solutions of {Kolmogorov}'s equations for nonhomogeneous jump {Markov} processes},
  author  = {Feinberg, Eugene A. and Mandava, Manasa and Shiryaev, Albert N.},
  journal = {Journal of Mathematical Analysis and Applications},
  volume  = {411},
  number  = {1},
  pages   = {261--270},
  year    = {2014},
  doi     = {10.1016/j.jmaa.2013.09.043},
  url     = {https://doi.org/10.1016/j.jmaa.2013.09.043}
}

@article{gretton2012kernel-test,
  title   = {A Kernel Two-Sample Test},
  author  = {Gretton, Arthur and Borgwardt, Karsten M. and Rasch, Malte J. and Sch{\"o}lkopf, Bernhard and Smola, Alexander},
  journal = {Journal of Machine Learning Research},
  volume  = {13},
  number  = {25},
  pages   = {723--773},
  year    = {2012},
  url     = {https://jmlr.org/papers/v13/gretton12a.html}
}

@article{bonneel2015sliced-wasserstein,
  title   = {Sliced and {Radon} {Wasserstein} Barycenters of Measures},
  author  = {Bonneel, Nicolas and Rabin, Julien and Peyr{\'e}, Gabriel and Pfister, Hanspeter},
  journal = {Journal of Mathematical Imaging and Vision},
  volume  = {51},
  number  = {1},
  pages   = {22--45},
  year    = {2015},
  doi     = {10.1007/s10851-014-0506-3},
  url     = {https://doi.org/10.1007/s10851-014-0506-3}
}

@inproceedings{chen2023analog-bits,
  title     = {{Analog Bits}: Generating Discrete Data using Diffusion Models with Self-Conditioning},
  author    = {Chen, Ting and Zhang, Ruixiang and Hinton, Geoffrey},
  booktitle = {The Eleventh International Conference on Learning Representations},
  year      = {2023},
  url       = {https://openreview.net/forum?id=3itjR9QxFw}
}

@article{alosh1987inar,
  title   = {First-order integer-valued autoregressive ({INAR}(1)) process},
  author  = {Al-Osh, M. A. and Alzaid, A. A.},
  journal = {Journal of Time Series Analysis},
  volume  = {8},
  number  = {3},
  pages   = {261--275},
  year    = {1987},
  doi     = {10.1111/j.1467-9892.1987.tb00438.x}
}

@article{tong2024ot-cfm,
  title = {Improving and Generalizing Flow-Based Generative Models
           with Minibatch Optimal Transport},
  author = {Tong, Alexander and Fatras, Kilian and Malkin, Nikolay
            and Huguet, Guillaume and Zhang, Yanlei
            and Rector-Brooks, Jarrid and Wolf, Guy
            and Bengio, Yoshua},
  journal = {Transactions on Machine Learning Research},
  year = {2024},
  url = {https://arxiv.org/abs/2302.00482}
}

@misc{liang2025scppdm,
  title = {{scPPDM}: A Diffusion Model for Single-Cell Drug-Response Prediction},
  author = {Liang, Zhaokang and Zhuang, Shuyang and Jiao, Xiaoran
            and Mao, Weian and Chen, Hao and Shen, Chunhua},
  year = {2025},
  eprint = {2510.11726},
  archivePrefix = {arXiv},
  primaryClass = {q-bio.QM},
  url = {https://arxiv.org/abs/2510.11726}
}

@article{steinmetz2019distributed,
  author = {Steinmetz, Nicholas A. and Zatka-Haas, Peter
            and Carandini, Matteo and Harris, Kenneth D.},
  title = {Distributed coding of choice, action and engagement
           across the mouse brain},
  journal = {Nature},
  volume = {576},
  number = {7786},
  pages = {266--273},
  year = {2019},
  doi = {10.1038/s41586-019-1787-x},
  url = {https://doi.org/10.1038/s41586-019-1787-x}
}

@misc{spikeprophecy2026data,
  title         = {{SpikeProphecy}: A Large-Scale Benchmark for Autoregressive Neural Population Forecasting},
  author        = {Minnick, John R. and Geng, Jinghui and Hussain, Kamran
                   and Gonzalez-Ferrer, Jesus and Robbins, Ash
                   and Mostajo-Radji, Mohammed A. and Haussler, David
                   and Eshraghian, Jason K. and Teodorescu, Mircea},
  year          = {2026},
  howpublished  = {arXiv preprint arXiv:2605.12992},
  eprint        = {2605.12992},
  archivePrefix = {arXiv},
  primaryClass  = {q-bio.NC},
  url           = {https://arxiv.org/abs/2605.12992},
  note          = {Processed data: \url{https://huggingface.co/datasets/mysteriousauthor/spikeprophecy-steinmetz}}
}

@article{gneiting2007proper,
  author = {Gneiting, Tilmann and Raftery, Adrian E.},
  title = {Strictly proper scoring rules, prediction,
           and estimation},
  journal = {Journal of the American Statistical Association},
  volume = {102},
  number = {477},
  pages = {359--378},
  year = {2007},
  doi = {10.1198/016214506000001437},
  url = {https://doi.org/10.1198/016214506000001437}
}

@article{ferro2014fair,
  author = {Ferro, Christopher A. T.},
  title = {Fair scores for ensemble forecasts},
  journal = {Quarterly Journal of the Royal Meteorological Society},
  volume = {140},
  number = {683},
  pages = {1917--1923},
  year = {2014},
  doi = {10.1002/qj.2270},
  url = {https://doi.org/10.1002/qj.2270}
}

@article{brier1950verification,
  author = {Brier, Glenn W.},
  title = {Verification of forecasts expressed in terms
           of probability},
  journal = {Monthly Weather Review},
  volume = {78},
  number = {1},
  pages = {1--3},
  year = {1950},
  doi = {10.1175/1520-0493(1950)078<0001:VOFEIT>2.0.CO;2},
  url = {https://journals.ametsoc.org/view/journals/mwre/78/1/1520-0493_1950_078_0001_vofeit_2_0_co_2.xml}
}

% \section*{References}

% References follow the acknowledgments in the camera-ready paper. Use unnumbered first-level heading for
% the references. Any choice of citation style is acceptable as long as you are
% consistent. It is permissible to reduce the font size to \verb+small+ (9 point)
% when listing the references.
% Note that the Reference section does not count towards the page limit.
% \medskip

% {
% \small

% [1] Alexander, J.A.\ \& Mozer, M.C.\ (1995) Template-based algorithms for
% connectionist rule extraction. In G.\ Tesauro, D.S.\ Touretzky and T.K.\ Leen
% (eds.), {\it Advances in Neural Information Processing Systems 7},
% pp.\ 609--616. Cambridge, MA: MIT Press.

% [2] Bower, J.M.\ \& Beeman, D.\ (1995) {\it The Book of GENESIS: Exploring
%   Realistic Neural Models with the GEneral NEural SImulation System.}  New York:
% TELOS/Springer--Verlag.

% [3] Hasselmo, M.E., Schnell, E.\ \& Barkai, E.\ (1995) Dynamics of learning and
% recall at excitatory recurrent synapses and cholinergic modulation in rat
% hippocampal region CA3. {\it Journal of Neuroscience} {\bf 15}(7):5249-5262.
% }

%%%%%%%%%%%%%%%%%%%%%%%%%%%%%%%%%%%%%%%%%%%%%%%%%%%%%%%%%%%%

\appendix

\section{Additional Method Details and Theory}
\label{app:method-theory}

\subsection{Consistency rules for Count Flow Map}
\label{app:flow-map-consistency-proof}

We justify the three consistency rules in
Equation~\eqref{eq:count-flow-map-consistency} and
Proposition~\ref{prop:count-flow-map-characterization}.
Let $Q_t$ be a time-dependent Markov jump generator on
$\mathcal{X}=\mathbb{N}_0^d$. Assume that the associated process is
nonexplosive, admits a unique transition family $P^Q_{s,t}$, and satisfies
the regularity needed for the derivatives below.
See \citet{norris1997markov-chains} for continuous-time Markov-chain
background and \citet{feinberg2014kolmogorov} for Kolmogorov equations and
uniqueness results for nonhomogeneous jump Markov processes.

\paragraph{Chapman--Kolmogorov consistency.}
For $s\le u\le t$, the Markov property gives
\[
P^Q_{s,t}(y\mid x)
=
\sum_{z\in\mathcal{X}}
P^Q_{s,u}(z\mid x)P^Q_{u,t}(y\mid z),
\]
which is exactly $P^Q_{s,t}=P^Q_{s,u}P^Q_{u,t}$.

\paragraph{Lagrangian (forward) consistency.}
The diagonal generator satisfies $P^Q_{t,t+h}=I+hQ_t+o(h)$. Combining this expansion with Chapman--Kolmogorov consistency,
\[
P^Q_{s,t+h}=P^Q_{s,t}P^Q_{t,t+h},
\]
gives, for every test function $f$,
\[
\frac{P^Q_{s,t+h}f-P^Q_{s,t}f}{h}
=
P^Q_{s,t}\frac{P^Q_{t,t+h}f-f}{h}
\longrightarrow P^Q_{s,t}Q_tf.
\]
Hence $\partial_tP^Q_{s,t}=P^Q_{s,t}Q_t$.

\paragraph{Eulerian (backward) consistency.}
Similarly,
\[
P^Q_{s,t}=P^Q_{s,s+h}P^Q_{s+h,t}.
\]
Using $P^Q_{s,s+h}=I+hQ_s+o(h)$ and rearranging yields
\[
\frac{P^Q_{s+h,t}-P^Q_{s,t}}{h}
\longrightarrow -Q_sP^Q_{s,t},
\]
so $\partial_sP^Q_{s,t}=-Q_sP^Q_{s,t}$.

These arguments show that the exact transition family satisfies all three consistency rules. Conversely, the forward equation with $K_{s,s}=I$ uniquely determines $P^Q_{s,t}$, and the same is true for the backward equation with $K_{t,t}=I$. Finally, suppose that $K_{s,t}$ satisfies Chapman--Kolmogorov consistency and has diagonal generator $Q_t$. Then
\[
K_{s,t+h}=K_{s,t}K_{t,t+h}
\]
and the same difference-quotient argument used above gives $\partial_tK_{s,t}=K_{s,t}Q_t$. Uniqueness of the forward equation therefore implies $K_{s,t}=P^Q_{s,t}$. This proves Proposition~\ref{prop:count-flow-map-characterization}.

\subsection{Exact likelihood of the count transition kernel}
\label{app:count-kernel-likelihood}

We first specify the finite-time parameterization used in our model.
Let $\Delta=t-s$. Using the local rates
$\lambda_{\theta,i}(x,s)$ and
$\mu_{\theta,i}(x,s)=x_i\beta_{\theta,i}(x,s)$ introduced in
Section~\ref{subsec:count-transition-kernel}, for mixture component $j$ we
parameterize the finite-time birth mean and death probability as
\begin{equation}
\begin{aligned}
a_{j,i}(x,s,t)
&=
\Delta\,\lambda_{\theta,i}(x,s)
\exp\!\left\{
\Delta c^+_{\theta,j,i}(x,s,t)
\right\},\\
q_{j,i}(x,s,t)
&=
1-\exp\!\left[
-\Delta\,\beta_{\theta,i}(x,s)
\exp\!\left\{
\Delta c^-_{\theta,j,i}(x,s,t)
\right\}
\right].
\end{aligned}
\label{eq:finite-time-correction-parameterization}
\end{equation}
The functions $c^+_{\theta,j,i}$ and $c^-_{\theta,j,i}$ modify the
finite-time birth and death behavior. Multiplication by $\Delta$ inside
the exponential makes these corrections vanish to first order near the
diagonal, so that
$a_{j,i}=h\lambda_{\theta,i}+O(h^2)$ and
$q_{j,i}=h\beta_{\theta,i}+O(h^2)$ over an interval of length $h$.

We use a finite mixture to increase the flexibility of the transition
kernel. Let $J$ denote the number of components and let
$w_{\theta,j}(x,s,t)\ge0$, with
$\sum_{j=1}^Jw_{\theta,j}(x,s,t)=1$, denote their weights.
Conditional on component $j$, the coordinates follow the
Poisson--Binomial transition described in
Equation~\eqref{eq:poisson-binomial-transition}, with parameters
$a_{j,i}$ and $q_{j,i}$ defined above.

For fixed $x$, component $j$, and coordinate $i$, the relation $Y_i=x_i-D_i+B_i$ implies that, if $D_i=d$, then $B_i=y_i-x_i+d$. Hence
\begin{equation}
\begin{aligned}
k_{\theta,j,i}(y_i\mid x_i,s,t)
=
\sum_{d=\max(0,x_i-y_i)}^{x_i}
&\binom{x_i}{d}q_{j,i}^{d}(1-q_{j,i})^{x_i-d}\\
&\times
\exp(-a_{j,i})
\frac{a_{j,i}^{\,y_i-x_i+d}}{(y_i-x_i+d)!}.
\end{aligned}
\label{eq:exact-coordinate-count-kernel}
\end{equation}
Conditional on a mixture component, the coordinates are sampled independently. The full kernel is therefore
\begin{equation}
K_{\theta,s,t}(y\mid x)
=
\sum_{j=1}^Jw_{\theta,j}(x,s,t)
\prod_{i=1}^dk_{\theta,j,i}(y_i\mid x_i,s,t).
\label{eq:exact-mixture-count-kernel}
\end{equation}
Each coordinate distribution is normalized, the mixture weights sum to one, and $D_i\le x_i$ almost surely. Thus Equation~\eqref{eq:exact-mixture-count-kernel} defines a valid transition kernel on $\mathbb{N}_0^d$ without a predefined maximum count. Setting $J=1$ recovers the kernel described in the main text.

\subsection{Proof of diagonal generator consistency}
\label{app:diagonal-generator-proof}

% We prove Proposition~\ref{prop:diagonal-generator-consistency}. Fix $x$ and $t$, and consider the interval $[t,t+h]$. For each mixture component $j$, local boundedness of the corrections gives
% \[
% a_{j,i}
% =h\lambda_{\theta,i}(x,t)+O(h^2),
% \qquad
% q_{j,i}
% =h\beta_{\theta,i}(x,t)+O(h^2).
% \]
We prove Proposition~\ref{prop:diagonal-generator-consistency}. Fix $x$
and $t$, and consider the interval $[t,t+h]$. From
Equation~\eqref{eq:finite-time-correction-parameterization}, local
boundedness of $c^+_{\theta,j,i}$ and $c^-_{\theta,j,i}$ gives, for
every mixture component $j$,
\[
a_{j,i}
=
h\lambda_{\theta,i}(x,t)+O(h^2),
\qquad
q_{j,i}
=
h\beta_{\theta,i}(x,t)+O(h^2).
\]
Therefore
\[
\mathbb{P}(B_{j,i}=1)
=h\lambda_{\theta,i}(x,t)+O(h^2),
\qquad
\mathbb{P}(D_{j,i}=1)
=h\mu_{\theta,i}(x,t)+O(h^2),
\]
while the probabilities of two or more births, two or more deaths, simultaneous birth and death, or events in more than one coordinate are all $O(h^2)$. Consequently,
\[
\mathbb{P}(Y=x+e_i\mid x,M=j)
=h\lambda_{\theta,i}(x,t)+O(h^2),
\]
\[
\mathbb{P}(Y=x-e_i\mid x,M=j)
=h\mu_{\theta,i}(x,t)+O(h^2),
\]
and
\[
\mathbb{P}(Y=x\mid x,M=j)
=1-h\sum_i\bigl[\lambda_{\theta,i}(x,t)+\mu_{\theta,i}(x,t)\bigr]+O(h^2).
\]
The first-order terms do not depend on the mixture component, so averaging over the mixture weights leaves them unchanged. At $h=0$, the Poisson means and death probabilities are zero, hence $Y=x$ almost surely. This proves the proposition.

Under local boundedness and sufficient regularity of the jump rates, the
exact transition family generated by $Q_t^\theta$ also satisfies, for each
fixed $(x,t)$,
\[
P^\theta_{t,t+h}(\cdot\mid x)
=
\delta_x+hQ_t^\theta(x,\cdot)+O_{\mathrm{TV}}(h^2).
\]
Since $K_{\theta,t,t+h}$ has the same expansion,
\[
D_{\mathrm{TV}}\!\left(
K_{\theta,t,t+h}(\cdot\mid x),
P^\theta_{t,t+h}(\cdot\mid x)
\right)=O(h^2).
\]

\subsection{Diagonal matching and its relation to count-FM}
\label{app:count-fm-diagonal-matching}

For completeness, we derive the target rates and the population optimum used in the diagonal loss. Given an endpoint pair $(x_0,x_1)$, let $n_i:=|x_{1,i}-x_{0,i}|$ and $\sigma_i:=\operatorname{sgn}(x_{1,i}-x_{0,i})$. The conditional bridge is $X_t^{(i)}=x_{0,i}+\sigma_iB_t^{(i)}$ with $B_t^{(i)}\sim\operatorname{Binomial}(n_i,t)$.

For a fixed coordinate $i$, let $p_t^{(i)}(x_i\mid x_0,x_1)$ denote its conditional bridge marginal. Local mass preservation gives
\[
\begin{aligned}
\partial_t p_t^{(i)}(x_i\mid x_0,x_1)
={}&
\bar\lambda_{t,i}(x_i-1)p_t^{(i)}(x_i-1\mid x_0,x_1)\\
&+\bar\mu_{t,i}(x_i+1)p_t^{(i)}(x_i+1\mid x_0,x_1)\\
&-\bigl[\bar\lambda_{t,i}(x_i)+\bar\mu_{t,i}(x_i)\bigr]
p_t^{(i)}(x_i\mid x_0,x_1).
\end{aligned}
\]
The first two terms describe birth and death influx into $x_i$, while the last term describes outflux from $x_i$.
Substituting the Binomial bridge gives the rates in Equation~\eqref{eq:conditional-bridge-rates}. Equivalently, the $n_i$ required unit changes can be viewed as occurring independently over $[0,1]$; conditional on not having occurred before time $t$, each remaining event has instantaneous hazard $(1-t)^{-1}$.

Let $p_t(x)$ be the marginal bridge after averaging over $(X_0,X_1)\sim\pi$. The marginal rates
\[
\lambda_i^\star(x,t)
=\mathbb{E}[\bar\lambda_{t,i}\mid X_t=x],
\qquad
\mu_i^\star(x,t)
=\mathbb{E}[\bar\mu_{t,i}\mid X_t=x]
\]
generate $p_t$ by conditional-to-marginal generator matching \citep{holderrieth2025generator-matching}.

It remains to verify that Equation~\eqref{eq:diagonal-rate-loss} identifies these marginal rates.
For any nonnegative target $A$ and any prediction $b(X)>0$ measurable with respect to $X$,
\[
\mathbb{E}[\ell(A,b(X))\mid X]
=b(X)-\mathbb{E}[A\mid X]\log b(X).
\]
When $\mathbb{E}[A\mid X]>0$, this is minimized at
$b(X)=\mathbb{E}[A\mid X]$; when $\mathbb{E}[A\mid X]=0$, the optimum is
approached as $b(X)\downarrow0$.
Applying this separately to the birth and death rates gives $\lambda_i^\star$ and $\mu_i^\star$. Moreover, using $0\log 0:=0$,
\[
\ell(a,b)-\ell(a,a)
=a\log\frac{a}{b}-a+b
=:D_{\mathrm{GKL}}(a,b),
\]
so the excess diagonal loss is an integrated generalized KL divergence between the exact and model marginal rates, exactly as in count-FM \citep{wei2026count-flow-matching}.

\subsection{Off-diagonal Chapman--Kolmogorov self-distillation}
\label{app:chapman-kolmogorov-distillation}

For fixed $x$ and arbitrary $s<u<t$, define the composed target kernel
\[
R_{\theta;s,u,t}(y\mid x)
:=
\sum_z\widetilde K_{\theta,s,u}(z\mid x)
\widetilde K_{\theta,u,t}(y\mid z),
\]
where the target kernels are treated as fixed when optimizing the direct transition. The conditional cross-entropy is
\[
\mathcal{C}_\theta(x,s,u,t)
:=
\mathbb{E}_{Y\sim R_{\theta;s,u,t}}
[-\log K_{\theta,s,t}(Y\mid x)].
\]
Adding and subtracting $\log R_{\theta;s,u,t}(Y\mid x)$ gives
\begin{equation}
\mathcal{C}_\theta(x,s,u,t)
=
H(R_{\theta;s,u,t})
+
D_{\mathrm{KL}}\!\left(
R_{\theta;s,u,t}
\Vert K_{\theta,s,t}(\cdot\mid x)
\right).
\label{eq:ck-cross-entropy-decomposition}
\end{equation}
Thus, with gradients stopped through the target, minimizing the
cross-entropy is equivalent to minimizing the conditional KL divergence.
Since $\widetilde K_\theta$ and $K_\theta$ have identical values, zero KL divergence implies the Chapman--Kolmogorov identity.

The same divergence controls the corresponding marginal discrepancy. For any starting distribution $\nu$, data processing followed by Pinsker's inequality gives
\[
D_{\mathrm{TV}}\!\left(
\nu K_{\theta,s,t},
\nu K_{\theta,s,u}K_{\theta,u,t}
\right)
\le
\sqrt{\frac12
\mathbb{E}_{X\sim\nu}
D_{\mathrm{KL}}\!\left(
(K_{\theta,s,u}K_{\theta,u,t})(\cdot\mid X)
\Vert
K_{\theta,s,t}(\cdot\mid X)
\right)}.
\]

\subsection{Terminal error analysis}
\label{app:terminal-error-analysis}

We separate the terminal error into endpoint truncation, estimation of the diagonal generator, and approximation of its finite-time transition family. Let $p_t^\star$ denote the marginal distribution of the exact signed-binomial bridge, let $Q_t^\star$ denote its marginal generator, and let $Q_t^\theta$ denote the model diagonal generator with exact transition family $P^\theta_{s,t}$. For a partition $\Pi=\{0=t_0<\cdots<t_L=\tau\}$, define $K_\theta^\Pi:=\prod_{k=0}^{L-1}K_{\theta,t_k,t_{k+1}}$. By the triangle inequality,
\[
D_{\mathrm{TV}}(p_1,p_0K_\theta^\Pi)
\le{}
D_{\mathrm{TV}}(p_1,p_\tau^\star)
+
D_{\mathrm{TV}}(p_\tau^\star,p_0P^\theta_{0,\tau}) +
D_{\mathrm{TV}}(p_0P^\theta_{0,\tau},p_0K_\theta^\Pi).
\]
We bound the three terms separately.

\subsubsection{Endpoint truncation}
\label{app:endpoint-truncation-error}

For an endpoint pair $(x_0,x_1)$, let $n_i:=|x_{1,i}-x_{0,i}|$. At $\tau=1-\varepsilon$, coordinate $i$ has completed all $n_i$ required unit changes with probability $\tau^{n_i}$. Conditional independence across coordinates gives
\[
\mathbb{P}(X_\tau=X_1\mid X_0=x_0,X_1=x_1)
=\tau^{\|x_1-x_0\|_1}.
\]
Using $(X_\tau,X_1)$ from the same bridge as a coupling,
\begin{equation}
D_{\mathrm{TV}}(p_\tau^\star,p_1)
\le
\mathbb{E}_\pi[1-\tau^{\|X_1-X_0\|_1}]
\le
\varepsilon\,\mathbb{E}_\pi\|X_1-X_0\|_1.
\label{eq:endpoint-truncation-bound}
\end{equation}
This is the endpoint bound from count-FM \citep{wei2026count-flow-matching}.

\subsubsection{Generator estimation error}
\label{app:generator-estimation-error}

Define the integrated rate error
\[
\mathcal{E}_{\mathrm{rate}}(\theta)
:=
\int_0^\tau
\mathbb{E}_{X_t\sim p_t^\star}
\sum_{i=1}^d
\left[
D_{\mathrm{GKL}}(\lambda_i^\star(X_t,t),\lambda_{\theta,i}(X_t,t))
+
D_{\mathrm{GKL}}(\mu_i^\star(X_t,t),\mu_{\theta,i}(X_t,t))
\right]dt.
\]
Under the usual absolute-continuity and integrability conditions for path-space relative entropy of jump processes \citep{arampatzis2015pathwise-sensitivity},
\[
D_{\mathrm{KL}}(p_\tau^\star\Vert p_0P^\theta_{0,\tau})
\le\mathcal{E}_{\mathrm{rate}}(\theta).
\]
Pinsker's inequality and uniform sampling of $t$ in Equation~\eqref{eq:diagonal-rate-loss} then give
\begin{equation}
D_{\mathrm{TV}}(p_\tau^\star,p_0P^\theta_{0,\tau})
\le
\sqrt{\frac{\tau}{2}
\left[
\mathcal{L}_{\mathrm{diag}}(\theta)-\mathcal{L}_{\mathrm{diag}}^\star
\right]}.
\label{eq:generator-estimation-bound}
\end{equation}
This is the same rate-estimation argument used in count-FM \citep{wei2026count-flow-matching}.

\subsubsection{Finite-time Flow Map error}
\label{app:finite-time-map-error}

We next bound the error from replacing the exact transition family $P^\theta$ of the learned generator by the learned finite-time kernels $K_\theta$.

\begin{lemma}[Kernel telescoping bound]
\label{lem:kernel-telescoping}
Let $A_0,\ldots,A_{L-1}$ and $B_0,\ldots,B_{L-1}$ be Markov kernels and let $\nu_0$ be an initial distribution. Define $\nu_k:=\nu_0A_0\cdots A_{k-1}$ for $k\ge1$. Then
\begin{equation}
D_{\mathrm{TV}}\!\left(
\nu_0A_0\cdots A_{L-1},
\nu_0B_0\cdots B_{L-1}
\right)
\le
\sum_{k=0}^{L-1}
\mathbb{E}_{X\sim\nu_k}
D_{\mathrm{TV}}\!\left(A_k(\cdot\mid X),B_k(\cdot\mid X)\right).
\label{eq:kernel-telescoping-bound}
\end{equation}
\end{lemma}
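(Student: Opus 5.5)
The argument is a hybrid (telescoping) decomposition combined with two elementary facts about total variation. For $k=0,\ldots,L$ define the hybrid distributions
\[
\rho_k:=\nu_0A_0\cdots A_{k-1}B_k\cdots B_{L-1},
\]
so that $\rho_L=\nu_0A_0\cdots A_{L-1}$ and $\rho_0=\nu_0B_0\cdots B_{L-1}$. The triangle inequality for $D_{\mathrm{TV}}$ then gives
\[
D_{\mathrm{TV}}(\rho_L,\rho_0)\le\sum_{k=0}^{L-1}D_{\mathrm{TV}}(\rho_{k+1},\rho_k).
\]
It remains to bound each consecutive difference by the $k$th summand in Equation~\eqref{eq:kernel-telescoping-bound}.

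Consecutive hybrids differ only in step $k$: writing $C_k:=B_{k+1}\cdots B_{L-1}$ (the identity kernel when $k=L-1$), we have $\rho_{k+1}=\nu_kA_kC_k$ and $\rho_k=\nu_kB_kC_k$, where $\nu_k=\nu_0A_0\cdots A_{k-1}$ is exactly the distribution in the statement. Two steps follow.

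First, applying the common kernel $C_k$ cannot increase total variation. For measures $\alpha,\beta$ on the countable space, $|\alpha C-\beta C|(y)\le\sum_z|\alpha(z)-\beta(z)|C(y\mid z)$, and summing over $y$ uses $\sum_yC(y\mid z)=1$. This yields $D_{\mathrm{TV}}(\rho_{k+1},\rho_k)\le D_{\mathrm{TV}}(\nu_kA_k,\nu_kB_k)$.

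Second, the mixing distribution $\nu_k$ is common to both sides, so joint convexity of total variation gives
\[
D_{\mathrm{TV}}(\nu_kA_k,\nu_kB_k)=\tfrac12\sum_y\Bigl|\sum_x\nu_k(x)\bigl[A_k(y\mid x)-B_k(y\mid x)\bigr]\Bigr|\le\sum_x\nu_k(x)\,D_{\mathrm{TV}}\bigl(A_k(\cdot\mid x),B_k(\cdot\mid x)\bigr).
\]
This is $\mathbb{E}_{X\sim\nu_k}D_{\mathrm{TV}}(A_k(\cdot\mid X),B_k(\cdot\mid X))$. Summing over $k$ completes the proof.

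No step is genuinely hard. The only care needed is the ordering convention in the hybrids, so that the expectation is taken under $\nu_k$, the $A$-driven marginal, rather than under the $B$-driven one. In addition, the interchange of sums on the countably infinite space $\mathbb{N}_0^d$ must be justified; this holds by Tonelli's theorem, since all terms are nonnegative and the sums are absolutely convergent.
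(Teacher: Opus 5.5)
Your proposal is correct and is essentially the paper's argument: your hybrids $\rho_k$ are exactly the partial products in the paper's insert-and-subtract identity, and your per-term bound (TV contraction under the common kernel $B_{k+1}\cdots B_{L-1}$, then convexity over $\nu_k$) is the step the paper summarizes as ``Markov kernels contract total variation.'' You simply make explicit the convexity step and the Tonelli justification that the paper leaves implicit.
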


\begin{proof}
Insert and subtract one kernel at a time,
\[
\nu_0A_0\cdots A_{L-1}-\nu_0B_0\cdots B_{L-1}
=
\sum_{k=0}^{L-1}
\nu_0A_0\cdots A_{k-1}(A_k-B_k)B_{k+1}\cdots B_{L-1}.
\]
Markov kernels contract total variation, so the total variation norm of the $k$th term is bounded by the corresponding expectation in Equation~\eqref{eq:kernel-telescoping-bound}. Summing proves the result.
\end{proof}

Applying Lemma~\ref{lem:kernel-telescoping} with $A_k=K_{\theta,t_k,t_{k+1}}$ and $B_k=P^\theta_{t_k,t_{k+1}}$ gives
\begin{equation}
\mathcal{E}_{\mathrm{map}}(\Pi)
\le
\sum_{k=0}^{L-1}
\mathbb{E}_{X\sim\nu_k^\theta}
D_{\mathrm{TV}}\!\left(
K_{\theta,t_k,t_{k+1}}(\cdot\mid X),
P^\theta_{t_k,t_{k+1}}(\cdot\mid X)
\right),
\label{eq:finite-time-map-bound}
\end{equation}
where $\nu_k^\theta:=p_0K_{\theta,t_0,t_1}\cdots K_{\theta,t_{k-1},t_k}$. Thus the terminal map error is controlled by the finite-time approximation errors on the intervals actually used at inference.

\paragraph{Midpoint Chapman--Kolmogorov consistency.}
As described in Section~\ref{subsec:count-flow-map-training}, for each
interval $[a,b]$ we use its midpoint $m=(a+b)/2$ in the
Chapman--Kolmogorov loss. Applying the same midpoint relation recursively
to the resulting subintervals gives a dyadic partition of any finite
interval. For an interval $[a,b]$, define the midpoint residual
\[
\rho(a,b)
:=
\sup_x
D_{\mathrm{TV}}\left(
K_{\theta,a,b}(\cdot\mid x),
(K_{\theta,a,m}K_{\theta,m,b})(\cdot\mid x)
\right).
\]
For a depth-$J$ dyadic partition of $[s,t]$, let $\delta_J$ be the maximum
total-variation discrepancy between $K_\theta$ and $P^\theta$ over the
$2^J$ smallest intervals, uniformly over $x$, and let $\rho_\ell$ be the
maximum midpoint residual among the intervals at level $\ell$. Repeated midpoint subdivision and
total-variation contraction give
\begin{equation}
\sup_x
D_{\mathrm{TV}}\left(
K_{\theta,s,t}(\cdot\mid x),
P^\theta_{s,t}(\cdot\mid x)
\right)
\le
\sum_{\ell=0}^{J-1}2^\ell\rho_\ell
+2^J\delta_J.
\label{eq:dyadic-consistency-bound}
\end{equation}
If midpoint identity
$K_{\theta,a,b}=K_{\theta,a,m}K_{\theta,m,b}$ holds on every interval in
the recursive subdivision, then $\rho_\ell=0$ for all $\ell$.
Appendix~\ref{app:diagonal-generator-proof} establishes the local
$O(h^2)$ transition error. 
If this local $O(h^2)$ bound holds uniformly over $x$ and the dyadic subintervals, then $\delta_J=O(h^2)$. Since there
are $2^J$ intervals of length $h=(t-s)/2^J$, we have
$2^J\delta_J=O(h)\to0$, and hence
$K_{\theta,s,t}=P^\theta_{s,t}$. Thus consistency at recursively chosen
midpoints is sufficient under this uniform local condition, while
Equation~\eqref{eq:dyadic-consistency-bound} also quantifies the error when
midpoint consistency is approximate.

\subsubsection{Terminal distribution error bound}
\label{app:combined-terminal-bound}

Combining Equations~\eqref{eq:endpoint-truncation-bound}, \eqref{eq:generator-estimation-bound}, and~\eqref{eq:finite-time-map-bound} yields the bound stated in Equation~\eqref{eq:terminal-error-bound}. More explicitly,
\begin{theorem}[Terminal distribution error]
\label{thm:terminal-distribution-error}
Assume that the target and learned generators admit unique
nonexplosive transition families with common initial distribution
$p_0$, that $\mathbb{E}_{\pi}\|X_1-X_0\|_1<\infty$, and that
the path-space absolute-continuity and integrability conditions
in Appendix~\ref{app:generator-estimation-error} hold.
Assume also that the population losses in
Equation~\eqref{eq:generator-estimation-bound} are finite.
Then, for any inference partition $\Pi$,
\[
\begin{aligned}
D_{\mathrm{TV}}(p_1,p_0K_\theta^\Pi)
\le{}&
\varepsilon\,\mathbb{E}_{\pi}\|X_1-X_0\|_1
+
\sqrt{\frac{\tau}{2}
\left[\mathcal{L}_{\mathrm{diag}}(\theta)-\mathcal{L}_{\mathrm{diag}}^\star\right]}\\
&+
\sum_{k=0}^{L-1}
\mathbb{E}_{X\sim\nu_k^\theta}
D_{\mathrm{TV}}\!\left(
K_{\theta,t_k,t_{k+1}}(\cdot\mid X),
P^\theta_{t_k,t_{k+1}}(\cdot\mid X)
\right).
\end{aligned}
\]
\end{theorem}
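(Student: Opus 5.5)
The plan is to combine the three-term triangle-inequality decomposition from the start of Appendix~\ref{app:terminal-error-analysis} with the separate bounds already established for each term. First, by the triangle inequality for total variation on $\mathbb{N}_0^d$,
\[
D_{\mathrm{TV}}(p_1,p_0K_\theta^\Pi)\le D_{\mathrm{TV}}(p_1,p_\tau^\star)+D_{\mathrm{TV}}(p_\tau^\star,p_0P^\theta_{0,\tau})+D_{\mathrm{TV}}(p_0P^\theta_{0,\tau},p_0K_\theta^\Pi).
\]
Here I use that the bridge marginal at $t=0$ is $p_0$ (the source marginal of $\pi$). This makes $p_\tau^\star=p_0P^\star_{0,\tau}$, by the nonexplosiveness and uniqueness assumption for $Q^\star_t$ together with generator matching (Appendix~\ref{app:count-fm-diagonal-matching}).

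For the first term, I will invoke Equation~\eqref{eq:endpoint-truncation-bound}. It uses the coupling of $(X_\tau,X_1)$ within the same bridge and the elementary inequality $1-\tau^n\le n(1-\tau)=n\varepsilon$ (Bernoulli's inequality). The integrability assumption $\mathbb{E}_\pi\|X_1-X_0\|_1<\infty$ ensures the bound is finite.

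For the second term, I will apply the path-space relative-entropy bound of Appendix~\ref{app:generator-estimation-error}. This bound compares the law of the process with generator $Q^\star$ to that with $Q^\theta$, both started at $p_0$. Its terminal marginals are controlled by data processing, giving $D_{\mathrm{KL}}(p_\tau^\star\Vert p_0P^\theta_{0,\tau})\le\mathcal{E}_{\mathrm{rate}}(\theta)$. I then identify $\mathcal{E}_{\mathrm{rate}}(\theta)=\tau[\mathcal{L}_{\mathrm{diag}}(\theta)-\mathcal{L}_{\mathrm{diag}}^\star]$. This follows from the tower property: conditioning on $X_t$ inside Equation~\eqref{eq:diagonal-rate-loss} replaces the conditional targets $\bar\lambda,\bar\mu$ by $\lambda^\star,\mu^\star$ in the linear term $-a\log b$. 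The excess loss therefore equals the averaged generalized KL, and the factor $\tau$ comes from $t\sim\mathrm{Unif}[0,\tau]$. Finiteness of the population losses is needed so that this subtraction is legitimate. Pinsker's inequality then gives the square-root term.

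For the third term, I will apply Lemma~\ref{lem:kernel-telescoping} with $A_k=K_{\theta,t_k,t_{k+1}}$, $B_k=P^\theta_{t_k,t_{k+1}}$ and $\nu_0=p_0$. This requires the Chapman--Kolmogorov property of the exact family $P^\theta$ from Appendix~\ref{app:flow-map-consistency-proof}, so that $p_0P^\theta_{0,\tau}=p_0B_0\cdots B_{L-1}$. The result is Equation~\eqref{eq:finite-time-map-bound}, with the expectations taken under $\nu_k^\theta$, the model's own intermediate marginals. Summing the three bounds gives the statement.

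The main obstacle is the second step. The KL identity requires the learned rates to be positive wherever the target rates are, and the path-space Girsanov-type formula to hold for unbounded rates on an infinite state space; I will take both as the stated absolute-continuity and integrability hypotheses. The remaining steps are routine.
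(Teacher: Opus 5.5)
Your proposal is correct and follows the paper's proof essentially step for step. It uses the same three-term triangle inequality, the same-bridge coupling with $1-\tau^{n}\le n\varepsilon$ for truncation, path-space relative entropy plus Pinsker with $\mathcal{E}_{\mathrm{rate}}(\theta)=\tau[\mathcal{L}_{\mathrm{diag}}(\theta)-\mathcal{L}_{\mathrm{diag}}^\star]$ for generator estimation, and Lemma~\ref{lem:kernel-telescoping} for the finite-time map term; your added remarks (Chapman--Kolmogorov for $P^\theta$ giving $P^\theta_{0,\tau}=P^\theta_{t_0,t_1}\cdots P^\theta_{t_{L-1},t_L}$, and $p_\tau^\star=p_0P^\star_{0,\tau}$ via generator matching and uniqueness) make explicit steps the paper uses only implicitly.
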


The three terms correspond respectively to stopping the bridge before its singular endpoint, estimating the infinitesimal count dynamics, and learning their finite-time transition kernels.

\section{Additional Simulation Details and Results}
\label{app:simulation}

\subsection{Experimental details}
\label{app:simulation-details}

All experiments use five random seeds and evaluate
NFE $\in\{1,2,4,16,64,128,256\}$. Count Flow Map, Count-FM, CountsDiff,
D3PM, and Discrete Flow Maps are trained for 10,000 parameter updates using
MLPs with hidden width 256 and depth 4. The effective batch size is 512 and
the learning rate is $10^{-3}$. The two Count-FM samplers use the same trained
rate network and differ only at inference. Count Flow Map uses eight mixture
components in its finite-time transition head. Discrete Flow Maps uses 8,000
diagonal updates followed by 2,000 self-distillation updates, and D3PM uses
256 native diffusion steps. Count Flow Map and Count-FM use the signed-binomial
bridge with $\tau=0.98$. For D3PM and Discrete Flow Maps, counts are represented
by categories $0,\ldots,C_{\max}$ plus an overflow category.

For the 2-D simulation, we use 30,000 generated and target samples per seed.
Total variation (TV) distance is computed against the analytic target PMF,
while empirical $W_2$ is computed by optimal matching on 1,500 samples and
averaged over three subsamples. For the 32-D simulation, we use 5,000
generated and target samples per seed. Sliced $W_2$
\citep{bonneel2015sliced-wasserstein} uses 128 random projections, and
MMD$^2_{\mathrm{RBF}}$ \citep{gretton2012kernel-test} uses an RBF kernel
with bandwidth selected by the median-distance heuristic. Generation time is the median of three runs after
one warm-up.

\subsection{2-D simulation results}
\label{app:simulation-2d-results}

The 2-D target is an equal-weight two-component Gamma--Poisson mixture with
component mean vectors $(60,5)$ and $(60,40)$ and concentration vectors
$(160,80)$ and $(160,140)$, respectively. Table~\ref{tab:sim-2d-app} reports selected operating points.

\begin{table}[t]
\caption{2-D simulation endpoint quality at selected inference budgets.
Entries are mean $\pm$ standard deviation over five seeds. Lower is better for all metrics. The best quality metric at each NFE is shown in bold.}
\label{tab:sim-2d-app}
\centering
\begin{tabular}{llccc}
\toprule
Method & NFE
& TV$\downarrow$
& $W_2\downarrow$
& Time (s)$\downarrow$ \\
\midrule

Count Flow Map
& 1
& \textbf{0.153$\pm$0.005}
& \textbf{2.50$\pm$0.34}
& 0.008$\pm$0.000 \\
& 4
& 0.145$\pm$0.007
& 2.74$\pm$0.50
& 0.028$\pm$0.001 \\
& 128
& 0.129$\pm$0.003
& 2.71$\pm$0.34
& 0.739$\pm$0.039 \\
& 256
& 0.126$\pm$0.002
& 2.54$\pm$0.49
& 1.478$\pm$0.076 \\
\midrule

Count-FM + unit jump
& 1
& 0.742$\pm$0.002
& 30.10$\pm$0.34
& 0.004$\pm$0.000 \\
& 4
& 0.698$\pm$0.001
& 26.99$\pm$0.43
& 0.014$\pm$0.001 \\
& 128
& 0.221$\pm$0.002
& 5.04$\pm$0.27
& 0.359$\pm$0.021 \\
& 256
& 0.171$\pm$0.002
& 3.34$\pm$0.28
& 0.719$\pm$0.042 \\
\midrule

Count-FM + binomial $\tau$-leap
& 1
& 0.928$\pm$0.003
& 15.25$\pm$0.04
& 0.004$\pm$0.000 \\
& 4
& 0.325$\pm$0.001
& 5.14$\pm$0.15
& 0.015$\pm$0.001 \\
& 128
& 0.130$\pm$0.002
& 2.59$\pm$0.25
& 0.359$\pm$0.022 \\
& 256
& 0.131$\pm$0.003
& \textbf{2.17$\pm$0.35}
& 0.717$\pm$0.043 \\
\midrule

CountsDiff
& 1
& 1.000$\pm$0.000
& 21.74$\pm$0.19
& 0.003$\pm$0.000 \\
& 4
& 0.430$\pm$0.007
& 7.60$\pm$0.45
& 0.012$\pm$0.001 \\
& 128
& \textbf{0.102$\pm$0.003}
& \textbf{2.17$\pm$0.44}
& 0.292$\pm$0.020 \\
& 256
& \textbf{0.104$\pm$0.003}
& 2.26$\pm$0.23
& 0.583$\pm$0.042 \\
\midrule

D3PM
& 1
& 0.986$\pm$0.009
& 27.10$\pm$0.28
& 0.003$\pm$0.000 \\
& 4
& 0.974$\pm$0.009
& 26.72$\pm$0.33
& 0.015$\pm$0.001 \\
& 128
& 0.963$\pm$0.013
& 26.26$\pm$0.60
& 0.426$\pm$0.023 \\
& 256
& 0.181$\pm$0.008
& 2.57$\pm$0.46
& 0.855$\pm$0.047 \\
\midrule

Discrete Flow Maps
& 1
& 0.169$\pm$0.002
& 4.13$\pm$0.77
& 0.004$\pm$0.000 \\
& 4
& \textbf{0.132$\pm$0.003}
& \textbf{2.61$\pm$0.61}
& 0.013$\pm$0.001 \\
& 128
& 0.148$\pm$0.002
& 4.68$\pm$0.72
& 0.332$\pm$0.025 \\
& 256
& 0.145$\pm$0.002
& 4.45$\pm$0.22
& 0.663$\pm$0.051 \\

\bottomrule
\end{tabular}
\end{table}

\subsection{2-D endpoint and path visualizations}
\label{app:simulation-2d-visuals}

Figure~\ref{fig:sim-2d-endpoints-app} shows endpoint samples at the four
operating points reported in Table~\ref{tab:sim-2d-app}. At NFE=1, Count Flow
Map already closely recovers the two target modes. Both Count-FM samplers,
CountsDiff, and D3PM remain visibly far from the target, while Discrete Flow
Maps captures the two-mode geometry but has a larger $W_2$ error than Count
Flow Map. As NFE increases, the iterative methods progressively approach the
target. Notably,
D3PM remains far from the target through NFE=128 and only approaches it at
NFE=256, likely reflecting its less smooth categorical transition path and the
resulting difficulty of coarse inference.

\begin{figure}[t]
    \centering
    \includegraphics[width=\linewidth]{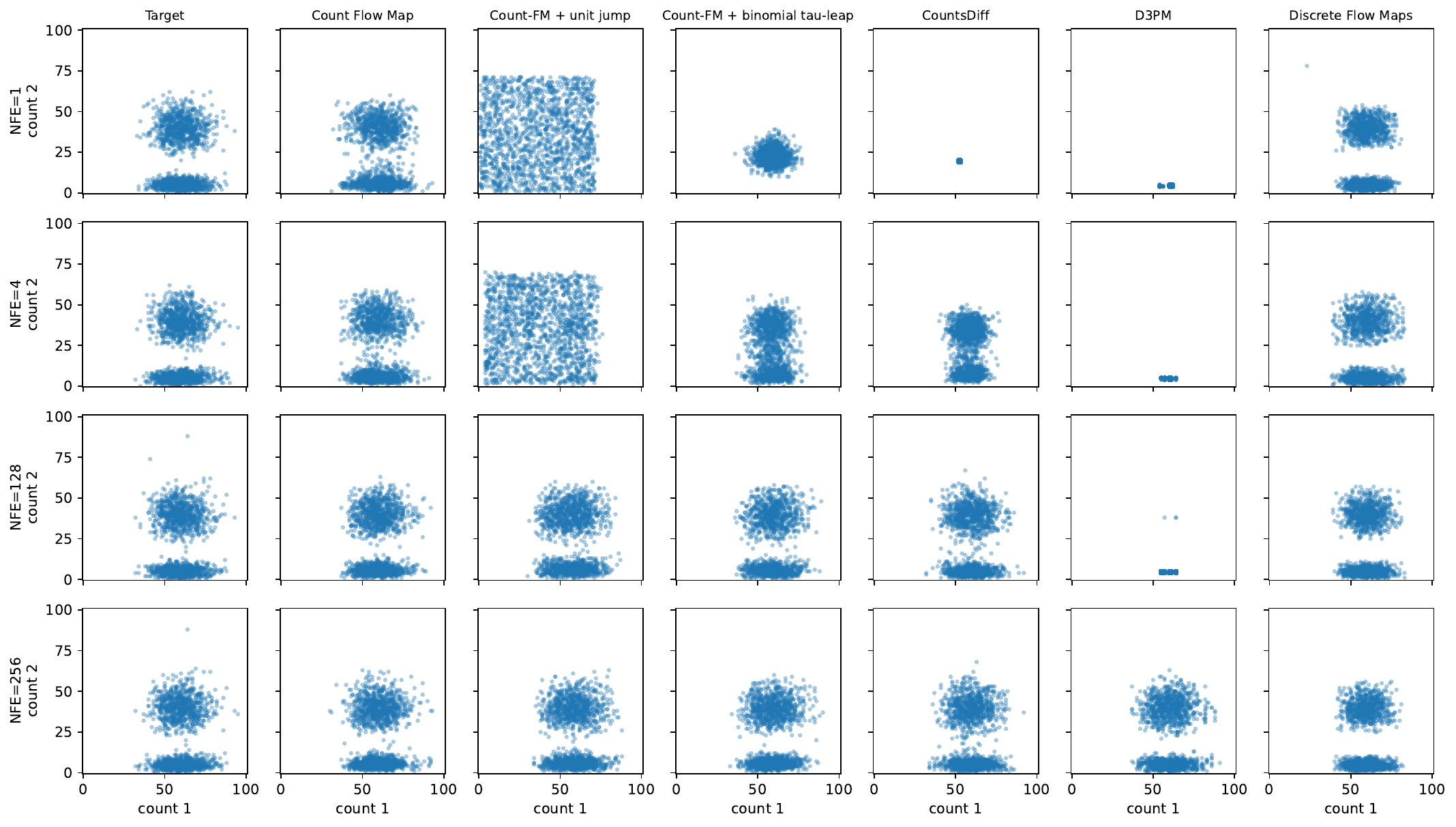}
    \caption{2-D endpoint samples at NFE $\in\{1,4,128,256\}$. Columns show the target and the six compared methods, and rows correspond to different inference budgets.}
    \label{fig:sim-2d-endpoints-app}
\end{figure}

Figure~\ref{fig:sim-2d-paths-app} shows generation paths at normalized progress
$q\in\{0,0.25,0.5,0.75,1\}$. For Count Flow Map, the five columns correspond
to the initial distribution and the distributions after each of four finite-time
map evaluations over $[0,\tau]$. Discrete Flow Maps is also shown using four
finite-time compositions, while Count-FM, CountsDiff, and D3PM are shown at
matched progress along 256-step sampling paths. Count Flow Map shows a smooth progression through count space,
similar to Count-FM, whereas the categorical-state methods exhibit more abrupt
transitions, reflecting the mismatch between categorical representations and
the ordered geometry of count data.

\begin{figure}[t]
    \centering
    \includegraphics[width=0.98\linewidth]{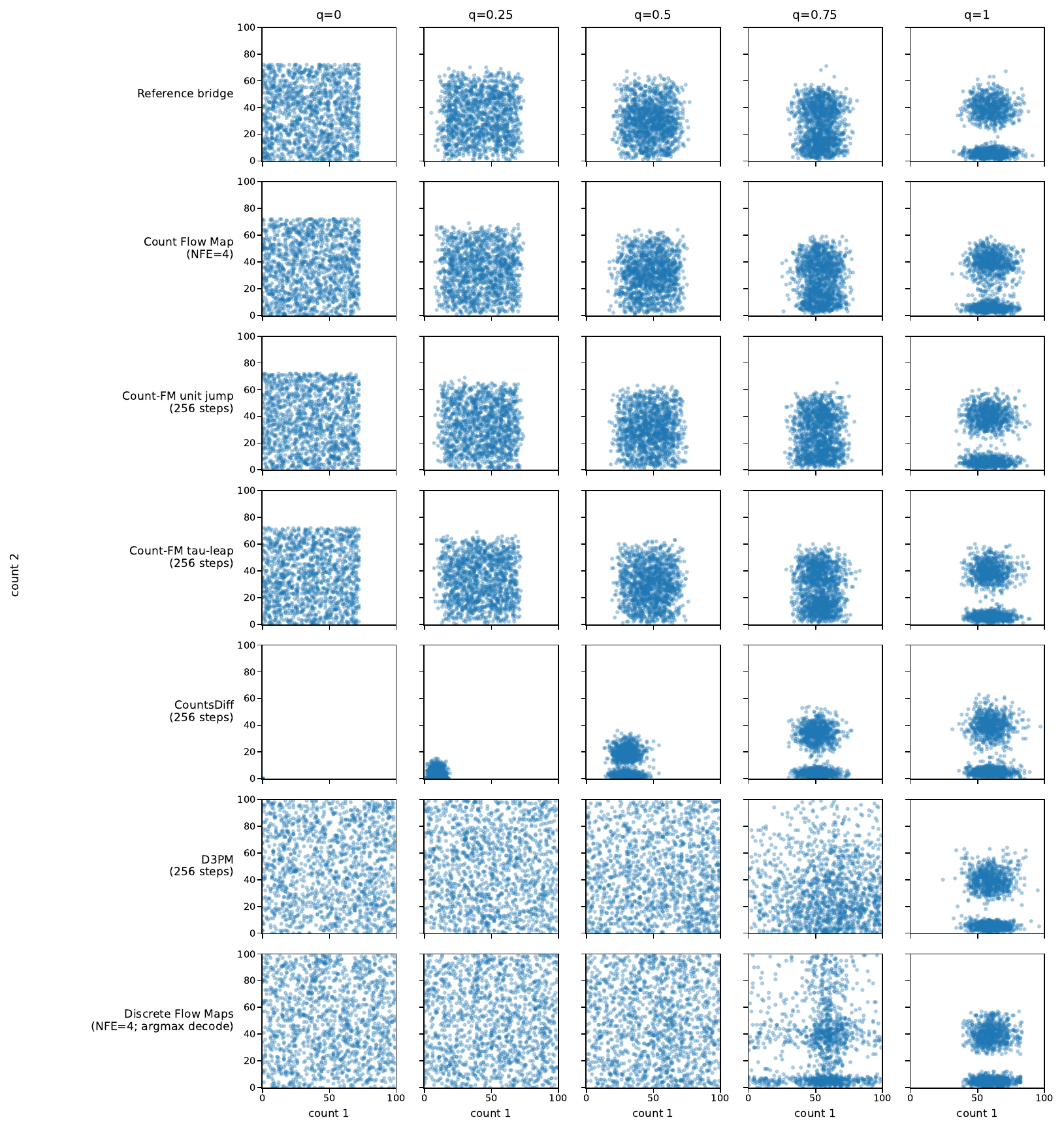}
    \caption{Generation paths for the 2-D simulation at normalized progress $q$. Count Flow Map and Discrete Flow Maps use NFE=4 finite-time compositions. Count-FM, CountsDiff, and D3PM are shown at matched progress along 256-step sampling paths.}
    \label{fig:sim-2d-paths-app}
\end{figure}

\subsection{32-D simulation}
\label{app:simulation-32d}

We consider a 32-dimensional Gamma--Poisson factor mixture with three mixture
components. Within each component, four latent Gamma factors induce dependence
across the 32 count variables. The resulting distribution is multimodal and
overdispersed, with a nominal count scale of 20. For Count Flow Map and
Count-FM, the source is uniform on $\{0,\ldots,50\}^{32}$. We evaluate sample
quality using sliced $W_2$ and RBF MMD$^2$. Table~\ref{tab:sim-32d-app} reports selected
operating points.

\begin{table}[t]
\caption{32-D simulation endpoint quality at selected inference budgets.
Entries are mean $\pm$ standard deviation over five seeds.
Lower is better for all metrics. The best quality metric at each NFE is shown
in bold.}
\label{tab:sim-32d-app}
\centering
\begin{tabular}{llccc}
\toprule
Method & NFE
& SW$_2\downarrow$
& MMD$^2_{\rm RBF}\downarrow$
& Time (s)$\downarrow$ \\
\midrule

Count Flow Map
& 1
& \textbf{0.556$\pm$0.134}
& \textbf{0.0007$\pm$0.0004}
& 0.005$\pm$0.000 \\
& 4
& \textbf{0.724$\pm$0.134}
& \textbf{0.0015$\pm$0.0009}
& 0.012$\pm$0.001 \\
& 128
& \textbf{0.518$\pm$0.048}
& \textbf{0.0005$\pm$0.0001}
& 0.308$\pm$0.005 \\
& 256
& \textbf{0.555$\pm$0.087}
& 0.0005$\pm$0.0002
& 0.617$\pm$0.008 \\
\midrule

Count-FM + unit jump
& 1
& 10.898$\pm$0.490
& 0.1717$\pm$0.0029
& 0.003$\pm$0.000 \\
& 4
& 9.388$\pm$0.284
& 0.1762$\pm$0.0018
& 0.005$\pm$0.000 \\
& 128
& 1.273$\pm$0.105
& 0.0076$\pm$0.0014
& 0.109$\pm$0.001 \\
& 256
& 0.804$\pm$0.105
& 0.0025$\pm$0.0004
& 0.215$\pm$0.003 \\
\midrule

Count-FM + binomial $\tau$-leap
& 1
& 4.511$\pm$0.148
& 0.1047$\pm$0.0035
& 0.003$\pm$0.000 \\
& 4
& 2.522$\pm$0.113
& 0.0237$\pm$0.0006
& 0.005$\pm$0.000 \\
& 128
& 0.796$\pm$0.069
& 0.0011$\pm$0.0001
& 0.086$\pm$0.001 \\
& 256
& 0.826$\pm$0.061
& 0.0010$\pm$0.0002
& 0.172$\pm$0.002 \\
\midrule

CountsDiff
& 1
& 7.577$\pm$0.111
& 0.2551$\pm$0.0134
& 0.002$\pm$0.000 \\
& 4
& 4.328$\pm$0.135
& 0.0865$\pm$0.0022
& 0.005$\pm$0.000 \\
& 128
& 0.622$\pm$0.067
& 0.0006$\pm$0.0001
& 0.128$\pm$0.001 \\
& 256
& 0.599$\pm$0.154
& \textbf{0.0004$\pm$0.0001}
& 0.254$\pm$0.002 \\
\midrule

D3PM
& 1
& 7.681$\pm$0.331
& 0.2656$\pm$0.0132
& 0.004$\pm$0.000 \\
& 4
& 4.698$\pm$0.129
& 0.0917$\pm$0.0027
& 0.017$\pm$0.000 \\
& 128
& 2.545$\pm$0.116
& 0.0196$\pm$0.0007
& 0.582$\pm$0.002 \\
& 256
& 1.507$\pm$0.084
& 0.0054$\pm$0.0004
& 1.167$\pm$0.003 \\
\midrule

Discrete Flow Maps
& 1
& 4.642$\pm$0.081
& 0.0850$\pm$0.0022
& 0.004$\pm$0.000 \\
& 4
& 2.150$\pm$0.193
& 0.0121$\pm$0.0010
& 0.013$\pm$0.001 \\
& 128
& 1.303$\pm$0.145
& 0.0027$\pm$0.0005
& 0.351$\pm$0.002 \\
& 256
& 1.634$\pm$0.172
& 0.0034$\pm$0.0005
& 0.701$\pm$0.004 \\

\bottomrule
\end{tabular}
\end{table}

Count Flow Map shows a clear advantage at low NFE and remains stable across
inference budgets. It achieves the lowest sliced $W_2$ at all selected NFEs
and the lowest MMD$^2_{\mathrm{RBF}}$ through NFE=128. Figure~\ref{fig:sim-32d-efficiency-app}
shows the quality--efficiency curves.

\begin{figure}[t]
    \centering
    \includegraphics[width=\linewidth]{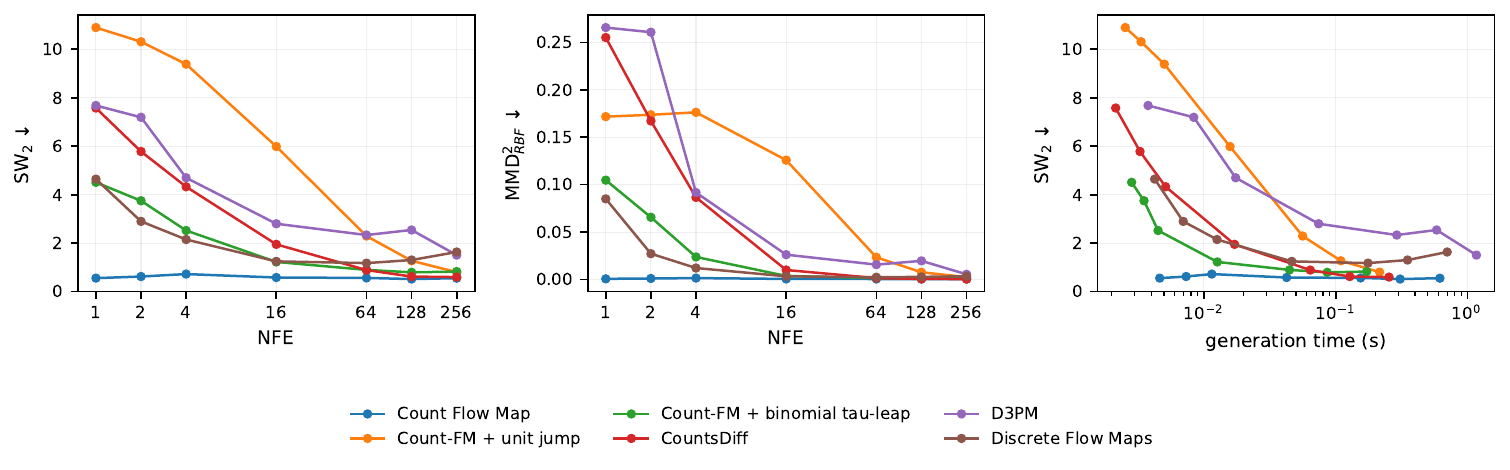}
    \caption{32-D quality--efficiency curves. Left and center show sliced
    $W_2$ and RBF MMD$^2$ versus NFE. Right shows sliced $W_2$ versus measured
    generation time. Count Flow Map achieves strong quality at NFE=1 and retains the lowest sliced $W_2$ across the full NFE grid. CountsDiff only slightly improves MMD$^2_{\mathrm{RBF}}$ at the largest inference budget.}
    \label{fig:sim-32d-efficiency-app}
\end{figure}

\section{Additional Single-Cell Results}
\label{app:scrna}

\subsection{Data, split, and evaluation}
\label{app:scrna-details}

We construct a Tahoe-100M panel containing all combinations of five
cell lines, eight drugs, and three doses ($0.05$, $0.5$, and $5$),
giving 120 conditions $c=(\ell,d,a)$. The cell-line identifiers are
\texttt{CVCL\_0023}, \texttt{CVCL\_0028}, \texttt{CVCL\_0069},
\texttt{CVCL\_0099}, and \texttt{CVCL\_0131}. The drugs are
Berbamine (dihydrochloride), Binimetinib, Cytarabine, Daidzin,
Everolimus, Flumatinib (mesylate), Fumaric acid, and Gemcitabine.
Each condition contains multiple measured cells; preparation retains
up to 600 treated cells per condition, yielding 70,064 treated cells
in total. These counts refer to individual cells, not independent
experimental replicates.

The split is performed at the condition level: all retained treated
cells sharing the same cell-line--drug--dose combination belong to
exactly one split. Using the fixed split seed 42, a coverage-constrained
assignment selects 24 test conditions (20\%) and then 18 validation
conditions (15\%), leaving 78 training conditions (65\%). Selection
spreads holdouts across cell-line--drug pairs, cell lines, and drugs.
A condition can be held out only if at least one other dose of its
cell-line--drug pair remains in training. The resulting split is:

\begin{center}
\small
\begin{tabular}{lrr rrr}
\toprule
& & & \multicolumn{3}{c}{Conditions at each dose} \\
\cmidrule(lr){4-6}
Split & Conditions & Treated cells & $0.05$ & $0.5$ & $5$ \\
\midrule
Training   & 78  & 45,024 & 28 & 25 & 25 \\
Validation & 18  & 10,710 &  4 &  6 &  8 \\
Testing    & 24  & 14,330 &  8 &  9 &  7 \\
\midrule
Total      & 120 & 70,064 & 40 & 40 & 40 \\
\bottomrule
\end{tabular}
\end{center}

Training therefore covers all five cell lines, all eight drugs,
all three dose values, and all 40 cell-line--drug pairs.
Of these pairs, 38 have two training doses and two have one.
For example, CVCL\_0023 with Binimetinib is observed in training
at doses $0.05$ and $0.5$, while dose $5$ is held out for testing.
The task evaluates prediction of unobserved dose-specific
distributions for known cell-line--drug pairs; it includes both
dose interpolation and extrapolation relative to the doses observed
for each pair. Validation conditions support checkpoint and
sampling-budget selection, and test conditions provide the reported
predictive evaluation. The same split is used for every method,
pairing, and training seed.

Each cell is represented by a 2,000-dimensional nonnegative integer
count vector. Highly variable genes are selected using only
training treated cells and their matched DMSO controls. The
conditioning vector has 14 entries: five cell-line indicators,
eight drug indicators, and one standardized $\log(1+\text{dose})$
value. Dose-standardization statistics are computed from training
conditions only. Thus the model conditions on the individual
variables rather than a separate categorical label for each of
the 120 combinations.

For non-OT training, each treated cell is paired with an independently
sampled DMSO cell from the same cell line and matched plate.
Control sampling uses replacement, so DMSO cells may be reused
across conditions and splits. The 70,064-cell total above counts
treated cells and does not include these reused control draws.
OT training permutes treated endpoints to minimize the total
raw-count $L_1$ distance within each condition and plate, preserving
both empirical endpoint marginals. This is a numerical training
coupling, not an inferred pairing of biological cells. Both count
models use each coupling; validation and test data remain unchanged.

Evaluation uses the same fixed subset of 400 cells per held-out
condition for every method and seed: 7,200 validation cells and
9,600 test cells, drawn from the larger retained pools above.
For distributional evaluation, counts are library-size normalized
to $10^4$, log-transformed, and projected onto 50 principal
components fitted using training cells only. This projection is
used for evaluation; generation remains in the original
2,000-dimensional count space. Sliced $W_2$ uses 128 random
projections. Metrics are computed within each condition and averaged
equally across conditions. For each trained method, we report the
mean and sample standard deviation of these averages across
training seeds 42, 123, and 2026 on the fixed split. This describes
variation across training runs, not biological replicate uncertainty.
Linear and Sinkhorn baselines are each evaluated once and have no
across-training-seed standard deviation.

For the table, perturbation effects relative to matched DMSO are
\[
\Delta_{cg}^{\log\mathrm{FC}}
=\log_2\frac{\bar x_{1,cg}+0.1}{\bar x_{0,cg}+0.1}.
\]
Generated effects replace $\bar x_{1,cg}$ by the generated mean.
LogFC Pearson correlation is computed on the 200 genes with largest observed $|\Delta_{cg}^{\log\mathrm{FC}}|$; top-gene overlap is the fraction of genes
shared by the observed and generated top-200 sets. These response genes
are ranked by effect magnitude, without a significance-test threshold.

\paragraph{Neural architectures.}
The main MLP blocks use three hidden layers of 512 units with SiLU
activations, without batch normalization or dropout. These settings
apply to the count-model rate and transition networks, the
conditional NB-VAE prior, posterior, and decoder networks, and the
CPA and scGen/scVIDR encoder and decoder networks. Conditional
NB-VAE, CPA, and the shared scGen/scVIDR VAE use latent dimension
128. Auxiliary feature and context encoders and method-specific
output heads are retained. CPA retains its embeddings, adversaries
with two hidden layers of 64 units, and dose networks with two
hidden layers of 128 units, using their original activations and
normalization.

Total neural parameter counts are approximately 14.52 million for
Count Flow Map, 5.36 million for Count-FM, 7.02 million for
conditional NB-VAE, 4.42 million for CPA, and 3.30 million for each
of scGen and scVIDR. The two Count-FM samplers use the same rate
network. Linear and Sinkhorn baselines have no neural network.

\paragraph{Count-model training and selection.}
Count Flow Map and Count-FM are trained for 50,000 updates with
batch size 64, learning rate $2\times10^{-4}$, and EMA decay
$0.9995$. Count Flow Map uses four mixture components, CK weight
one, a 2,000-update CK warm-up, and a 15,000-update interval-span
warm-up. Generation ends at $\tau=0.98$. Every 5,000 updates,
the EMA model is scored by condition-averaged validation sliced
$W_2$, at 1 NFE for Count Flow Map and 128 binomial $\tau$-leap
steps for Count-FM. The best checkpoint is selected separately
for each model, training coupling, and seed and is held fixed
across inference budgets. The full training budget is completed.

The two Count-FM samplers share the selected rate network.
Their NFE is subsequently selected on validation separately for
each sampler, training coupling, and seed. Unit jumps use
$\{16,64,128,256,512\}$ and binomial $\tau$-leaps use
$\{1,4,16,64,128,256\}$. Count Flow Map is reported at fixed
1 and 16 NFE, with its full curve evaluated on the latter grid.

\paragraph{Baseline training and selection.}
Baseline results use the repository implementations with the
architecture settings above. Conditional NB-VAE and CPA are
trained for 50,000 updates, with validation every 5,000 updates.
Conditional NB-VAE uses AdamW with batch size 64 and learning
rate $2\times10^{-4}$. CPA uses Adam with batch size 256,
learning rate $3\times10^{-4}$ for its autoencoder and
adversaries, and $4\times10^{-3}$ for its dose networks.

The shared scGen/scVIDR VAE uses Adam with batch size 64 and
learning rate $10^{-3}$. It is trained for 100 epochs, with
validation every 10 epochs. Each epoch comprises 1,407 updates
over the training treated cells and their matched controls,
giving 140,700 updates in total. scGen uses nearest-dose latent
arithmetic and scVIDR uses continuous log-dose scaling.

Each neural baseline selects the checkpoint with the lowest
condition-averaged validation sliced $W_2$ among ten evaluated
checkpoints, using the same 7,200 validation cells and fixed
training-only PCA as the count models. The full training budget
is completed. scGen and scVIDR share one VAE training trajectory
per seed but select checkpoints separately using their respective
downstream predictions, so their selected weights can differ.
All neural methods use training seeds 42, 123, and 2026.
Validation treated expression is used only for model selection,
and test treated expression is used only for final evaluation.
Parameter fitting and response-statistic estimation use training
data only.

\paragraph{Generation timing.}
Generation time is measured on the full 9,600-cell test subset,
using batches of 256 where applicable. Each seed's time is the
median of three runs after one untimed warm-up, including
generation transfers and excluding model loading, response-model
preparation, PCA, and metric computation. Training-only response
statistics for scGen/scVIDR and library-size summaries for CPA
are prepared before timing. The table summarizes these times
across training seeds. Count-FM's selected NFE can differ between
seeds.

\subsection{Condition-level perturbation recovery}
\label{app:scrna-examples}

We examine perturbation strength and response patterns across genes.
For condition $c$ and gene $g$, let $\bar x_{0,cg}$,
$\bar x_{1,cg}$, and $\bar x_{\mathrm{gen},cg}$ denote mean
raw counts in matched DMSO controls, observed treated cells,
and generated cells, respectively. We define
\[
\begin{aligned}
\delta_{cg}
&=\log(1+\bar x_{1,cg})-\log(1+\bar x_{0,cg}),\\
\widehat\delta_{cg}
&=\log(1+\bar x_{\mathrm{gen},cg})-\log(1+\bar x_{0,cg}).
\end{aligned}
\]
These effects use all 2,000 retained genes and differ from the
table's top-200-gene logFC metric.
Panel C of Figure~\ref{fig:scrna-efficiency} compares observed
strength $\|\delta_c\|_2$ with generated strength
$\|\widehat\delta_c\|_2$ across the 24 test conditions.
We report Pearson correlation across these magnitudes and
fit generated magnitudes against observed magnitudes through
the origin. A slope below one indicates overall underestimation.

Figure~\ref{fig:scrna-all-heldout-effects}A--B projects observed
and generated gene-response patterns onto common PCA axes
fitted using training-condition effects.
Panel C compares
$\operatorname{Corr}_g(\delta_{cg},\widehat\delta_{cg})$
at 1 and 16 NFE using paired markers for each test condition. All effect illustrations use one training run per coupling, with the same OT model evaluated at 1 and 16 NFE. The plots show response-pattern recovery across multiple conditions, alongside varying errors in projected direction and magnitude. 
Consistent with Figure~\ref{fig:scrna-efficiency}C, these results
illustrate that Count Flow Map captures variation in perturbation
responses with very few model evaluations.

\begin{figure}[t]
    \centering
    \includegraphics[width=\linewidth]{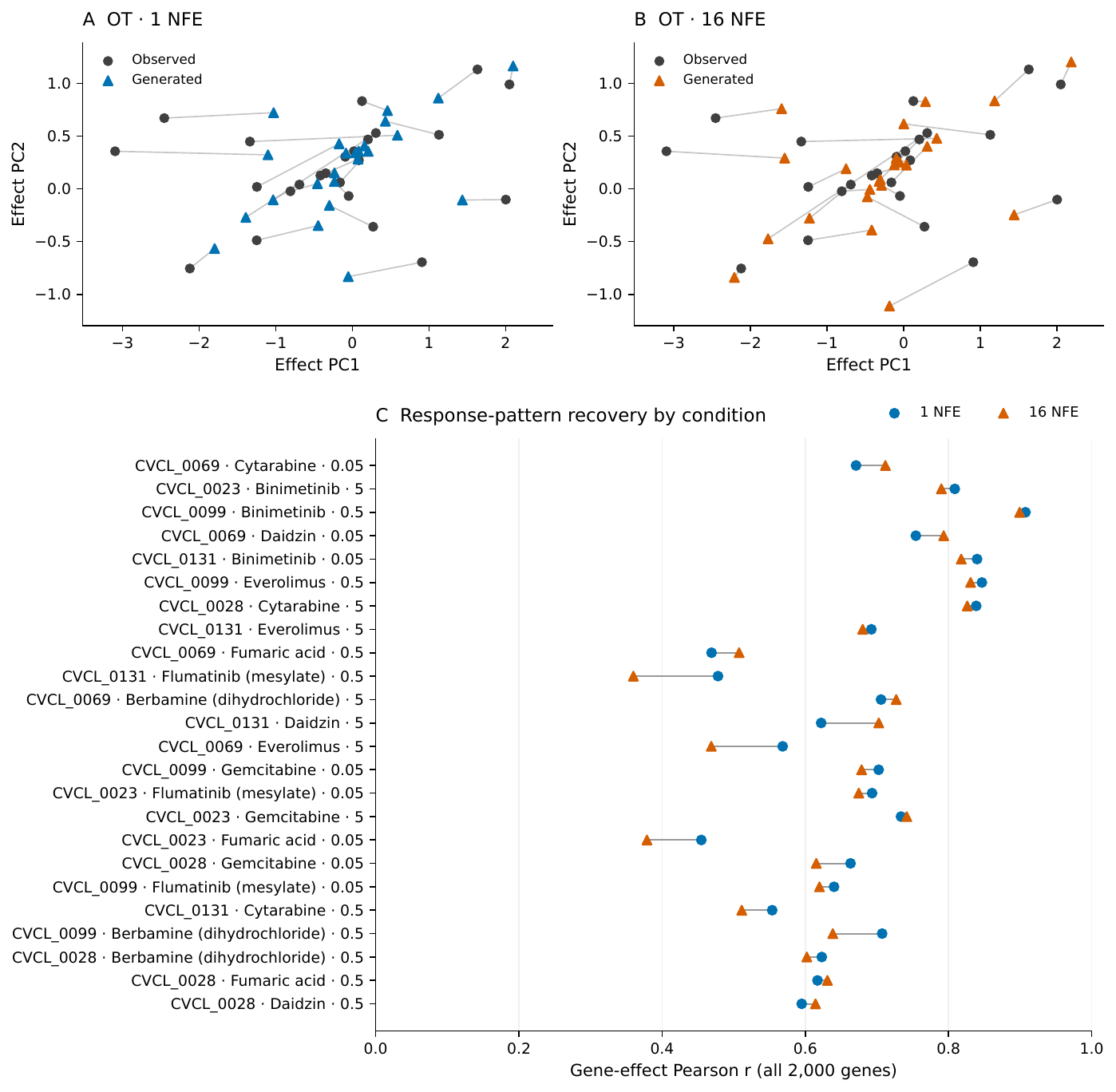}
    \caption{Condition-level perturbation recovery with OT coupling.
    A. Observed and generated gene effects at 1 NFE projected
    onto training-only PCA axes.
    B. Corresponding projections at 16 NFE.
    C. Pearson correlations between observed and generated
    effects across all 2,000 retained genes.
    Paired markers compare the two sampling budgets for each
    of the 24 test conditions, ordered by decreasing observed
    effect magnitude.
    Both budgets use the same Count Flow Map model from one
    fixed training run.
    A--B share the same PCA axes, with lines connecting the
    same condition.
    The correlations in C differ from the table's
    top-200-gene logFC correlation.}
    \label{fig:scrna-all-heldout-effects}
\end{figure}

\section{Additional Neural Forecasting Details and Results}
\label{app:neural}

\subsection{Data and Forecasting Protocol}
\label{app:neural-data}

We use the 50\,ms-binned SpikeProphecy release of the Steinmetz
recordings \citep{steinmetz2019distributed,spikeprophecy2026data},
pinned to dataset revision
\texttt{10da560900af2b7a5475faca63cbce4be8f860b4}.
The six recording indices are 004, 007, 011, 018, 025, and 033,
containing 703, 1,104, 613, 490, 677, and 538 neurons,
respectively. Their respective lengths are 60,887, 65,200,
68,927, 64,586, 60,569, and 64,850 bins, giving 385,019
population count vectors in total.

We retain the release's chronological 70\%/15\%/15\%
train/validation/test boundaries. Each input consists of ten
consecutive bins, followed by one target bin.
Histories and targets must lie entirely within the same split,
giving 269,451 eligible training windows across the six
recordings. These overlapping windows are not independent
biological replicates. Models are fitted separately to each
recording. Neither neurons nor count vectors are aligned
across recordings. The task conditions only on spike-count
history, without behavioral or stimulus covariates.

Training pairs are the naturally observed consecutive-bin pairs
$(X_0,X_1)=(Y_b,Y_{b+1})$, with context
$H_b=(Y_{b-9},\ldots,Y_b)$.
We use the signed-binomial bridge and conditional kernels
described in the main text, with context fixed throughout each
flow trajectory.
Inference starts from the last observed count vector $Y_b$
and applies a uniform flow-time grid on $[0,0.98]$.
NFE counts kernel or rate-network evaluations within one
predicted physical bin. History encoding is performed once per
forecast. Count-FM's unit-jump and binomial $\tau$-leap
samplers use the same trained rate network.

\paragraph{Architecture and training.}
Both count-flow models use a two-layer GRU history encoder
with width 256 and MLP heads with three hidden layers of
width 384 and SiLU activations.
Count Flow Map uses four mixture components.
Count features concatenate $x/4$ and
$\log(1+x)/\log 5$. These are network inputs, while the
stochastic state and generated outputs remain nonnegative
integer counts.

Each learning-rate candidate is trained for 30,000 AdamW
updates, with batch size 128, weight decay $10^{-5}$,
gradient-norm clipping at 5, and learning rate in
$\{10^{-4},3\times10^{-4}\}$.
We use three training seeds (42, 123, and 2026) and an
exponential moving average of weights with decay 0.9995
for validation and evaluation.

Count Flow Map uses the diagonal rate-matching and off-diagonal
Chapman--Kolmogorov losses described in the main text,
without an auxiliary endpoint loss.
Both losses are divided by the number of neurons.
The consistency weight increases linearly from zero to one
over 2,000 updates.
Its maximum sampled interval length increases linearly from
0.1 to 0.98 over 10,000 updates, after which the full interval
range is used.
The intermediate time bisects the sampled interval.
The two-step consistency target is sampled with stop-gradient
at the current online weights.
The moving-average model is used for validation and evaluation.
Count-FM uses the same diagonal rate-matching objective
and optimizer settings.

We validate every 3,000 updates using 512 validation histories
and 32 draws per history.
The checkpoint and learning rate are chosen by validation
energy score, averaged over 1 and 16 NFE for Count Flow Map
and evaluated at 64 NFE with the binomial $\tau$-leap sampler
for Count-FM.
Selected checkpoints are then held fixed across all reported
budgets and both Count-FM samplers.
The additional NFE sweep evaluates these frozen models
without retraining or test-based checkpoint selection.
All formal training candidates complete the prescribed
30,000 updates.

\subsection{Metrics, Aggregation, and Timing}
\label{app:neural-metrics}

We evaluate 4,096 evenly spaced eligible test histories per
recording, using the same histories for all methods, budgets,
and seeds. This gives 24,576 distinct test histories across
the six recordings.
Repeating training seeds does not create additional test
observations.
Each history has one observed next-bin count vector
$y\in\mathbb N_0^D$ and $M=64$ generated vectors
$x^{(1)},\ldots,x^{(M)}$.
We report the energy score and population CRPS
\citep{gneiting2007proper}. For CRPS, we use the finite-ensemble correction that excludes self-pairs \citep{ferro2014fair}.

The dimension-normalized energy-score estimator is
\begin{equation}
\widehat{\mathrm{ES}}
=
\frac{1}{\sqrt D}
\left[
\frac1M\sum_{m=1}^M\|x^{(m)}-y\|_2
-
\frac1M\sum_{m=1}^{M/2}
\|x^{(m)}-x^{(m+M/2)}\|_2
\right].
\label{eq:neural-energy}
\end{equation}
The second term uses disjoint pairs of independent draws.
Define mean population counts
$a^{(m)}=D^{-1}\sum_jx_j^{(m)}$
and $a_y=D^{-1}\sum_jy_j$.
Their CRPS estimator is
\begin{equation}
\widehat{\mathrm{CRPS}}
=
\frac1M\sum_m|a^{(m)}-a_y|
-
\frac{1}{2M(M-1)}
\sum_{m\ne n}|a^{(m)}-a^{(n)}|.
\label{eq:neural-crps}
\end{equation}

We also report neuron-wise RMSE of the predictive mean,
subtracting the Monte Carlo variance of that mean before
taking the square root.
For a recording with $C$ evaluated histories,
\begin{equation}
\widehat{\mathrm{RMSE}}
=
\sqrt{
\max\left\{
0,
\frac1{CD}\sum_{i=1}^C\sum_{j=1}^D
\left[
(\bar x_{ij}-y_{ij})^2-\frac{s_{ij}^2}{M}
\right]
\right\}
},
\label{eq:neural-rmse}
\end{equation}
where $\bar x_{ij}$ and $s_{ij}^2$ are the sample mean and
unbiased sample variance across generated draws.
The correction removes finite-draw bias from the squared-error
estimate, not from its square root.

Scores are first computed within each recording, using its
own neuron dimension, and then averaged equally over the six
recordings for each training seed.
We report the mean and sample standard deviation of these
three seed-level averages.
The uncertainty therefore summarizes variability across
repeated training runs on fixed recordings, rather than
sampling uncertainty across animals.
The dimension normalization and per-neuron population statistic
allow comparison of scalar scores without pooling differently
sized neuronal populations.

All NFE rows are timed under a common protocol on the
NVIDIA GeForce RTX 2080 Ti used throughout the paper.
Timing processes 16 test histories sequentially, generating
64 draws for each.
After an untimed warm-up pass, we take the median per-history
latency over three repetitions, synchronizing CUDA before
and after each timed pass.
Timing includes history encoding, count sampling, and
input/output device transfers, and excludes model loading
and metric computation.
Thus latency refers to a 64-draw population forecast,
not one generated vector.
Table~\ref{tab:neural-full} reports energy score, population
CRPS, RMSE, and latency for all 25 method--budget combinations.
Figure~\ref{fig:neural-scientific}A--C shows energy score
against NFE and population CRPS against NFE and measured
latency, using the full evaluated grids: 1--64 NFE for
Count Flow Map and 1--256 NFE for Count-FM.

\subsection{Population Events and Forecast Diagnostics}
\label{app:neural-diagnostics}
\paragraph{High-activity events and the dependence control.}
For each recording, let $q$ be the empirical 95th percentile
of total population count over training bins.
We predict the event $\sum_jY_{b+1,j}>q$ using
\[
\widehat p_b
=
\frac1M\sum_{m=1}^M
\mathbf 1\!\left\{
\sum_jx_j^{(m)}>q
\right\}.
\]
We evaluate the binary Brier score
\[
\left(
\widehat p_b-
\mathbf 1\!\left\{\sum_jy_j>q\right\}
\right)^2
\]
\citep{brier1950verification}, averaged over test histories.
Thresholds for recordings 004, 007, 011, 018, 025, and 033
are respectively 220, 437, 226, 273, 183, and 174 total
spikes per bin.
The corresponding observed test-event frequencies are
approximately 4.10\%, 9.18\%, 10.57\%, 3.10\%, 0.90\%,
and 3.93\%.
The event threshold is not recomputed on the test split.

The shuffled control independently permutes draw indices for
every neuron within each conditioning history.
This preserves each neuron's finite-ensemble marginal and
changes the cross-neuron pairing.
We score both ensembles with the same $M=64$, without a
finite-ensemble Brier correction.
At 1 NFE, mean Brier score is $0.031788$ for joint samples
and $0.032642$ after shuffling.
The paired increase is $0.000854\pm0.000139$.
At 16 NFE, the corresponding scores are $0.031626$ and
$0.032894$, with an increase of $0.001268\pm0.000125$.
These amount to approximately 2.6\% and 3.9\% reductions
relative to the shuffled control.
The benefit varies by recording and is close to zero in
recording 025, where test events are rare.
This control measures the forecast value of dependence in
the generated population distribution.
It does not identify synaptic or causal interactions.

\paragraph{Calibration.}
To handle discrete ties and finite ensembles, define
the randomized rank
\begin{equation}
R=
\frac{
\#\{m:a^{(m)}<a_y\}
+
U\bigl(\#\{m:a^{(m)}=a_y\}+1\bigr)
}{M+1},
\qquad
U\sim\operatorname{Unif}(0,1).
\end{equation}
For nominal level $c$, we report the fraction of histories
with $R\in[(1-c)/2,(1+c)/2]$.
At nominal 90\%, empirical rank coverage is approximately
64\% at 1 NFE and 72\% at 16 NFE, indicating undercoverage.
Increasing from 1 to 16 NFE improves this diagnostic,
but substantial miscalibration remains.

\begin{table}[p]
\centering
\small
\setlength{\tabcolsep}{3pt}
\begin{tabular}{lrcccc}
\toprule
Method & NFE & Energy score$\downarrow$
& Pop. CRPS$\downarrow$ & RMSE$\downarrow$
& Time (ms)$\downarrow$ \\
\midrule
Count Flow Map & 1
& $0.3933\pm0.0004$ & $0.02292\pm0.00034$
& $0.5531\pm0.0043$ & $2.93\pm0.01$ \\
& 2
& $0.3938\pm0.0002$ & $0.02293\pm0.00018$
& $0.5522\pm0.0010$ & $4.90\pm0.01$ \\
& 4
& $0.3938\pm0.0000$ & $0.02271\pm0.00008$
& $0.5523\pm0.0002$ & $8.81\pm0.01$ \\
& 8
& $0.3938\pm0.0001$ & $0.02255\pm0.00008$
& $0.5526\pm0.0001$ & $16.69\pm0.02$ \\
& 16
& $0.3938\pm0.0001$ & $0.02246\pm0.00008$
& $0.5528\pm0.0002$ & $32.45\pm0.04$ \\
& 32
& $0.3938\pm0.0002$ & $0.02238\pm0.00009$
& $0.5529\pm0.0002$ & $63.80\pm0.06$ \\
& 64
& $0.3937\pm0.0002$ & $0.02241\pm0.00010$
& $0.5529\pm0.0002$ & $126.77\pm0.07$ \\
\midrule
Count-FM (unit jump) & 1
& $0.4229\pm0.0002$ & $0.03510\pm0.00012$
& $0.5851\pm0.0002$ & $1.68\pm0.00$ \\
& 2
& $0.4098\pm0.0002$ & $0.02699\pm0.00016$
& $0.5706\pm0.0003$ & $2.43\pm0.01$ \\
& 4
& $0.4013\pm0.0002$ & $0.02355\pm0.00014$
& $0.5609\pm0.0003$ & $3.89\pm0.01$ \\
& 8
& $0.3967\pm0.0002$ & $0.02248\pm0.00015$
& $0.5559\pm0.0002$ & $6.83\pm0.04$ \\
& 16
& $0.3948\pm0.0001$ & $0.02222\pm0.00016$
& $0.5538\pm0.0002$ & $12.55\pm0.03$ \\
& 32
& $0.3940\pm0.0001$ & $0.02221\pm0.00016$
& $0.5530\pm0.0002$ & $24.05\pm0.06$ \\
& 64
& $0.3936\pm0.0001$ & $0.02219\pm0.00017$
& $0.5527\pm0.0001$ & $46.94\pm0.04$ \\
& 128
& $0.3934\pm0.0001$ & $0.02224\pm0.00017$
& $0.5526\pm0.0002$ & $92.74\pm0.14$ \\
& 256
& $0.3934\pm0.0001$ & $0.02223\pm0.00017$
& $0.5525\pm0.0002$ & $184.44\pm0.12$ \\
\midrule
Count-FM ($\tau$-leap) & 1
& $0.3955\pm0.0003$ & $0.03759\pm0.00029$
& $0.5584\pm0.0010$ & $1.85\pm0.00$ \\
& 2
& $0.3942\pm0.0002$ & $0.02676\pm0.00026$
& $0.5550\pm0.0004$ & $2.76\pm0.01$ \\
& 4
& $0.3937\pm0.0001$ & $0.02333\pm0.00019$
& $0.5535\pm0.0002$ & $4.55\pm0.02$ \\
& 8
& $0.3934\pm0.0001$ & $0.02242\pm0.00015$
& $0.5529\pm0.0001$ & $8.12\pm0.03$ \\
& 16
& $0.3934\pm0.0001$ & $0.02222\pm0.00017$
& $0.5527\pm0.0001$ & $15.20\pm0.03$ \\
& 32
& $0.3934\pm0.0001$ & $0.02222\pm0.00017$
& $0.5526\pm0.0002$ & $29.45\pm0.03$ \\
& 64
& $0.3934\pm0.0001$ & $0.02220\pm0.00017$
& $0.5526\pm0.0002$ & $58.21\pm0.14$ \\
& 128
& $0.3934\pm0.0001$ & $0.02224\pm0.00017$
& $0.5525\pm0.0002$ & $115.31\pm0.12$ \\
& 256
& $0.3934\pm0.0001$ & $0.02224\pm0.00017$
& $0.5525\pm0.0002$ & $229.72\pm0.28$ \\
\bottomrule
\end{tabular}
\caption{Full neural forecasting results.
Each entry is the mean $\pm$ sample standard deviation of
three seed-level means, each averaging the six recordings
equally. All rows use 4,096 test histories per recording,
64 draws per history, and the same selected checkpoint within
each method, recording, and seed.
The two Count-FM samplers share a rate network.
Time is latency per 64-draw forecast under the common timing
protocol. Values retain the precision of the exported results.
Displayed zero standard deviations reflect rounding and
should not be interpreted as zero variability.}
\label{tab:neural-full}
\end{table}

%%%%%%%%%%%%%%%%%%%%%%%%%%%%%%%%%%%%%%%%%%%%%%%%%%%%%%%%%%%%

% \newpage
% \input{checklist.tex}

\end{document}